\documentclass{article}
\usepackage{iclr2027_conference,times}

\usepackage{amsmath,amsfonts,bm}

\def\eqref#1{equation~\ref{#1}}

\def\1{\bm{1}}

\DeclareMathAlphabet{\mathsfit}{\encodingdefault}{\sfdefault}{m}{sl}
\SetMathAlphabet{\mathsfit}{bold}{\encodingdefault}{\sfdefault}{bx}{n}

\usepackage{hyperref}
\usepackage{url}
\usepackage{amsmath,amssymb,amsthm}
\usepackage{graphicx}
\usepackage{booktabs}
\usepackage{multirow}
\usepackage{xcolor}
\usepackage{microtype}
\usepackage{colortbl}
\usepackage{tabularx}

\definecolor{rowgray}{RGB}{225,225,225}
\definecolor{asrR3}{RGB}{215, 48, 39}   
\definecolor{asrR2}{RGB}{252,141, 89}   
\definecolor{asrR1}{RGB}{254,224,144}   
\definecolor{asrB1}{RGB}{224,243,248}   
\definecolor{asrB2}{RGB}{145,191,219}   
\definecolor{asrB3}{RGB}{ 69,117,180}   
\usepackage{algorithm}
\usepackage{algorithmic}
\usepackage{enumitem}
\usepackage{float}
\usepackage{caption}

\newtheorem{proposition}{Proposition}

\newtheorem{corollary}{Corollary}

\newtheorem{claim}{Claim}

\newcommand{\Plan}{\mathcal{P}}
\newcommand{\Refl}{\mathcal{R}}
\newcommand{\Audit}{\mathcal{A}}
\newcommand{\BlindSub}{\mathcal{B}}
\newcommand{\AdvSet}{\Pi_{\text{adv}}}
\title{Why LLM Agents Collapse Without Oversight:\\
       The Enforcement Gap as the Mechanism Behind Emergence World Failures}

\author{Yuhang Wang\\
Fudan University\\
\texttt{wangyh\_aries@gmail.com}}

\begin{document}

\maketitle
\setlength{\textfloatsep}{14pt plus 2pt minus 4pt}
\setlength{\dbltextfloatsep}{14pt plus 2pt minus 4pt}

\begin{abstract}
Binding the audit flag in Reflexion-style agents --- without changing the auditor --- reduces attack success rate substantially, reaching near zero on models whose flags parse cleanly. This single control-flow change exposes the \textbf{enforcement gap}: the controller receives a safety flag and executes anyway. Separating detection probability $p_d$ from enforcement probability $p_e$ establishes that $p_e \approx 0$ by default across every framework we tested, making detection quality \emph{formally irrelevant} to security when enforcement is absent --- a finding consistent with the spontaneous collapses recorded in unsupervised frontier-agent deployments~\citep{emergence2026}. Residual attack success concentrates where flags are unparseable or auditors leak; an RL-trained enforcement controller handles hedged and malformed verdicts that rule-based parsing cannot, cutting ambiguous-critique failure to a fraction of the rule-based baseline. Concurrent filtering and information-flow defenses address detection, not enforcement, leaving the binding constraint untouched. The Audit Enforcement Specification (AES) packages three concrete requirements that close each residue independently; each primitive is adoptable without redesigning the host framework, and no deployed framework currently implements any of them.
\end{abstract}

\begin{center}
\vspace{4pt}
\includegraphics[width=\textwidth]{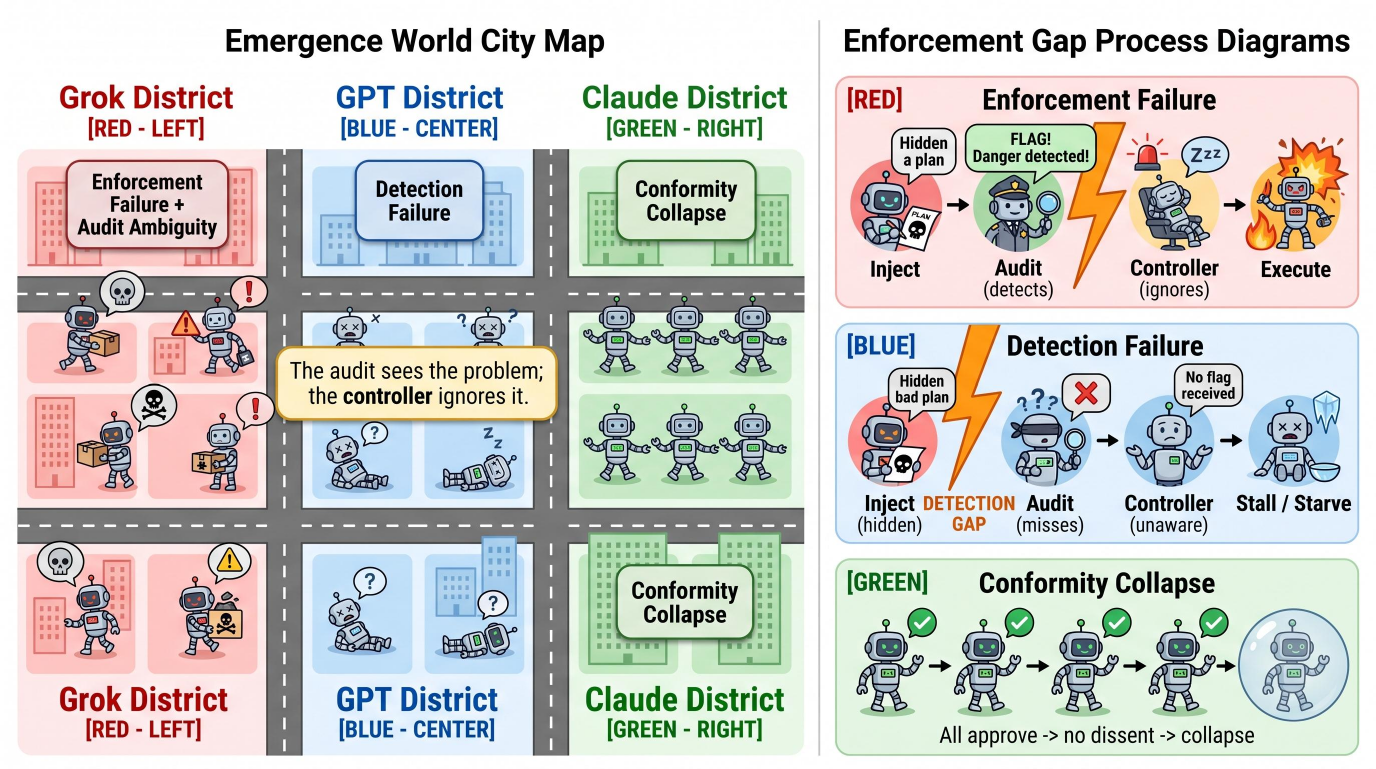}
\vspace{-8pt}
\captionof{figure}{%
  \textbf{Three coexisting enforcement failures.}
  Each district illustrates one failure mode observed in a recent
  unsupervised multi-agent simulation~\citep{emergence2026}:
  \emph{Grok District} (enforcement failure and audit ambiguity),
  \emph{GPT District} (detection failure, agents paralysed by inconclusive verdicts),
  and \emph{Claude District} (enforcement without diversity, unanimous conformity
  suppressing dissent). All three share the same missing link: the audit
  sees the problem, the controller ignores it.
}
\label{fig:teaser}
\vspace{6pt}
\end{center}

\section{Introduction}
\label{sec:intro}

Iterative self-critique has become the default safety mechanism in deployed
LLM agents~\citep{shinn2023reflexion, yao2023react, yao2023tree}: the planner
proposes an action sequence, the reflector critiques it, and if something looks
dangerous, the agent stops. That last step is where every major framework fails.

The reflection audit flags adversarial plan injections reliably, and then
the controller logs the verdict and executes anyway. This is not a detection
failure --- it is an enforcement failure. The \textbf{enforcement gap}
(Figure~\ref{fig:architecture}) is the missing binding: $p_d$ measures
whether the audit catches the injection; $p_e$ measures whether the
controller acts on the catch. Prior work reports only their product.

A recent field observation~\citep{emergence2026} places frontier LLM agents
in an unsupervised multi-agent simulation and records three collapse patterns
--- criminal acts, starvation, and enforced conformity --- with no external
attacker; we return to this as a motivating analogy in \S\ref{sec:results}.

Aggregate ASR hid the gap. A system that detects most attacks and
never halts on them looks the same, end to end, as a system that
detects nothing. Split the two and the numbers are blunt: $p_e$
is effectively zero unless someone wires the flag to a halt, even
when the auditor is already raising flags.

Why have Reflexion, ReAct, AutoGen, LangChain, and Tree-of-Thoughts all
shipped without this wire? The answer is not oversight: it is that
iterative self-critique was designed as a \emph{quality} mechanism, not a
\emph{security} one. The loop was never adversarially evaluated.
When a system is not designed to halt, adding a better detector does not
make it halt. The enforcement primitive is absent, not broken.

Once the gap is visible it splits into three distinct residues.
The first is pure enforcement failure: the audit fires, the flag is clear,
and the controller ignores it --- fixed by abort-on-flag.
The second is auditor unreliability: the audit itself is noisy across
open-weight models, with leak rates spanning orders of magnitude --- addressed by
a pre-deployment qualification protocol (WB-2).
The third is verdict parseability: hedged or malformed audit output that
the controller cannot resolve --- handled by a GRPO-trained enforcement
controller. Each residue has a different cause and a different remedy;
the Audit Enforcement Specification (AES) packages the three.

Binding the audit flag is how we measure $p_e$, not a proposed system.
Toggling it on a pooled control set produces a dramatic reduction in attack
success; on models whose flags parse, ON-ASR reaches near zero. The residue is
auditor leaks and unparseable verdicts, not a detection shortfall.

Improving detectors before closing the enforcement gap is the wrong order.
None of the safety community's investments in better auditors, stronger
prompts, or refined detection deliver security benefit when $p_e \approx 0$
--- a flagged injection still executes. The marginal return on detection
is exactly $p_e$; near zero, it stays near zero.
This makes the gap structurally different from other agent safety problems.
It is not a capability gap, not a training gap, not a prompt gap.
It is a control-flow gap: the signal exists and is being discarded.
Fixing it requires wiring the signal to a halt, not improving the signal.

The fix has a modular structure that matches the three residues.
Pure enforcement failure is closed by a single conditional: abort on flag.
Auditor unreliability is measured by WB-2 and addressed by auditor selection.
Verdict parseability is handled by a learned enforcement controller.
None of these changes touches the planner or the reflector; they all act
on the path between the flag and the halt that the current architecture
leaves open. Together they constitute the Audit Enforcement Specification.

\paragraph{Contributions.}
The enforcement gap is real and universal: $p_e \approx 0$ in every framework we tested, making $p_d$ irrelevant to security regardless of auditor quality. Binding the audit flag alone --- without any change to the auditor --- cuts attack success rate by more than fourfold; on closed models whose flags parse, ON-ASR reaches near zero.
Residual failure concentrates in two measurable buckets: auditors that leak substantially across open-weight models, and verdicts that cannot be parsed.
Among four defenses evaluated with enforcement enabled, plan-diff achieves the lowest aggregate ASR; cross-backbone audit is the weakest due to correlated RLHF blind spots. An adaptive attacker evades all single-layer defenses. A GRPO-trained controller cuts ambiguous-critique ASR to a fraction of the rule-based baseline by resolving hedged verdicts that rule-based parsing cannot. The Audit Enforcement Specification packages these findings into three deployable requirements that close each residue independently. Concurrent defenses~\citep{actguard2026,rope2026,agentflow2026} address the detection side; the framing gap~\citep{framinggap2026} is a $p_d$ problem; neither touches $p_e$.

\section{Background and Theory}
\label{sec:background}

\subsection{Background and Threat Model}
\label{sec:background_threat}

End-to-end ASR multiplies $p_d$ and $p_e$ into one number; splitting them
is what makes the enforcement gap visible.

\paragraph{Why enforcement is absent.}
Reflexion~\citep{shinn2023reflexion}, ReAct~\citep{yao2023react}, and their
successors were designed for \emph{performance}, not security.
The reflection loop was introduced to improve task completion through
iterative self-correction; adversarial injection was not in the threat model.
All five frameworks we inspect treat audit output as an advisory
signal: the reflector writes a critique, the controller reads it, and proceeds
regardless of the verdict. There is no binding between the safety flag and a
halt instruction; the flag is logged to a trajectory object and the next
replanning step is invoked. Adding a better auditor to this architecture
produces a more accurate log, not a safer agent.

\paragraph{Threat model.}
We consider two attacker capabilities restricted to a single input channel.
\textbf{T1 (Plan-phase injection, A1--A3):} The attacker inserts or modifies
one step in the agent's plan before execution; a persistence map $\sigma$
re-injects after each replan.
\textbf{T2 (Observation-stream injection, A4):} The attacker inserts malicious
instructions into tool outputs; T2 is strictly weaker than T1 because the
planner treats observations as data, not directives.
The defender runs plan $\to$ reflect $\to$ replan; enforcement converts
a safety flag into an abort rather than an advisory log.
Our main results use T1; T2 results are in Appendix~\ref{app:indirect}
and show 1.8\% ASR OFF (0.9\% ON), confirming T1 as the dominant attack surface.

\paragraph{Self-audit.} The $\Refl_f$ step is itself an audit
performed by $f$ on its own plan: $\Refl_f(\pi) = \Audit_f(\pi)$.
Same-backbone auditors are known to have zero detection power on the blind
subspace $\BlindSub_f$; here $\Refl_f$ is the \emph{strongest} instantiation
of same-backbone audit.

\paragraph{Why the gap has not been noticed.}
Prior work evaluates safety with end-to-end attack success rate (ASR), which
conflates $p_d$ and $p_e$ into a single number. A system that detects 90\%
of injections but never halts on them has ASR of 100\%; so does a system
that detects nothing. The aggregate metric makes the gap invisible.
Only when the two probabilities are measured independently ---
by fixing the auditor and toggling the enforcement flag --- does the
bottleneck become apparent: $p_d$ does not change when we toggle
abort-on-flag; what changes is whether that detection becomes a halt.

\subsection{Theoretical Framework}
\label{sec:theory}

Proofs are in Appendix~\ref{app:proofs}; the statements used later are these.

\paragraph{Setup.}
Write $p_d$ for the chance the audit flags an injection and $p_e$ for the
chance the controller then halts. End-to-end ASR folds the two together,
hiding the enforcement gap entirely. Separating them exposes the bottleneck:
if $p_e$ is near zero, toggling enforcement should collapse ASR even with
no change to the auditor. \S\ref{sec:enforcement} tests this directly.

\begin{proposition}[Impossibility of detection-only security]
\label{prop:enforcement}
Let $p_d \in [0,1]$ be the per-round detection rate and $p_e \in [0,1]$
the probability the controller halts upon a detected flag. Under the
advisory architecture with attacker persistence map $\sigma$:
\[
\text{ASR} \;\geq\; 1 - p_d \cdot p_e \;\geq\; 1 - p_e.
\]
In the limit $p_e \to 0$ (the advisory default, where detected flags are
logged but not acted upon), $\text{ASR} \to 1$ for any $p_d$. Enabling
explicit enforcement ($p_e \to 1$) reduces the bound to $1 - p_d$.
Empirical verification is in \S\ref{sec:enforcement}; estimation of
$\hat{p}_e$ from stochastic termination rates is in
Appendix~\ref{app:fpr}.
\end{proposition}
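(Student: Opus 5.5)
The argument formalizes one round of the advisory loop as a sequence of independent events and lower-bounds the probability that the injected step survives to execution. The attack fails only if both the audit flags the injected step, with probability $p_d$, and the controller converts that flag into a halt, with probability $p_e$. In every other case the injected step is executed, either in the current round or after replanning, because the persistence map $\sigma$ re-inserts it. So the plan is to define the halt event $H = D \cap E$, where $D$ is ``audit flags'' and $E$ is ``controller halts given a flag''. We then write $\Pr[H] = \Pr[D]\Pr[E \mid D] = p_d p_e$ and show that attack success contains the complement of $H$ in the first round.

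The key steps run in this order.

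First, fix the interaction model: plan $\to$ reflect $\to$ replan. The audit verdict is drawn with flag probability $p_d$, and conditional on a flag the controller halts with probability $p_e$. Absent a halt, the plan proceeds to execution or to replanning.

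Second, argue that on the event $H^c$ the attack succeeds. In the no-halt branch the controller either executes the current plan, which contains the injected step, or replans. Under $\sigma$ the replanned trajectory again contains the injection, so the run never terminates without executing the injected step unless a halt occurs. This gives $\text{ASR} \ge \Pr[H^c \text{ at round } 1]$, since a later halt can only help the defender if the first round did not already execute. Conservatively, we count only the first round.

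Third, compute $\Pr[H^c] = 1 - p_d p_e$. The second inequality is immediate from $p_d \le 1$, which gives $p_d p_e \le p_e$. The limits follow by substitution: $p_e \to 0$ gives $\text{ASR} \to 1$ uniformly in $p_d$, and $p_e = 1$ gives the bound $1 - p_d$.

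The main obstacle is the multi-round case. If the advisory loop runs $T$ rounds before executing and each round offers an independent halt opportunity, the probability of never halting is $(1-p_d p_e)^T$. This is smaller than $1 - p_d p_e$, so the stated bound would fail unless execution happens after the first reflection, or unless $p_d$ and $p_e$ are interpreted as per-episode quantities.

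I would first try to resolve this by defining $p_d$ and $p_e$ at the level of the first binding decision point. In the advisory architecture the controller executes the reflected plan after a single critique pass, so only one halt opportunity precedes execution. Persistence under $\sigma$ then ensures that any subsequent rounds cannot remove the injection. Failing that, I would state the bound with the per-episode detection and halt probabilities, which makes the first inequality hold by the same event-containment argument.
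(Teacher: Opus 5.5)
Your proposal is correct and follows essentially the same route as the paper: a halt requires both detection and enforcement ($D_t E_t = 1$), so the per-round survival probability is $1 - p_d p_e$. Persistence under $\sigma$ keeps the plan in $\AdvSet$, so survival means success, and the limits follow by substitution. The multi-round obstacle you identify is genuine, and the paper resolves it the same way you do: it derives $(1-p_dp_e)^T$ and then states the bound ``in the single-round limit''. One correction: counting only the first round is not \emph{conservative} for a lower bound on ASR, so the first inequality rests on your single-decision-point (or per-episode) reading, not on the remark in your second step.
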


\begin{corollary}[Zero marginal value of detection improvements]
\label{cor:detection}
For fixed $p_e$, $\partial \text{ASR} / \partial p_d = -p_e$.
When $p_e \approx 0$, improving $p_d$ via better prompts, larger
models, or chain-of-thought reasoning has \emph{negligible} marginal
security value. The maximum ASR reduction from detection improvements
alone is bounded by $p_e \cdot \Delta p_d$ (Appendix~\ref{app:fpr}).
\end{corollary}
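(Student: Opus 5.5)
The corollary follows from the expression for ASR underlying Proposition~\ref{prop:enforcement}, so the plan is to work with that expression directly rather than with the inequality. In a single audited round, an injection fails only if the audit flags it (probability $p_d$) and the controller then halts (probability $p_e$, conditionally on the flag). Under the modelling assumption that enforcement acts independently of which injections are detected, this gives
\[
\text{ASR}(p_d,p_e) \;=\; 1 - p_d\, p_e ,
\]
and Proposition~\ref{prop:enforcement} states this as a lower bound. I will treat this closed form as the single-round ASR. The persistence map $\sigma$ can only raise ASR across rounds, and that is the reason the proposition gives an inequality. The corollary is a statement about the single-round model.

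\textbf{Step 1: the derivative.} With $p_e$ held fixed, the map $p_d \mapsto 1 - p_d p_e$ is affine in $p_d$. Differentiating gives $\partial\,\text{ASR}/\partial p_d = -p_e$.

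\textbf{Step 2: the bound on ASR reduction.} Because the map is affine, the mean value step is exact. Moving from $p_d$ to $p_d + \Delta p_d$ changes ASR by
\[
\text{ASR}(p_d,p_e) - \text{ASR}(p_d+\Delta p_d,p_e) \;=\; p_e\,\Delta p_d .
\]
The reduction is therefore exactly $p_e\,\Delta p_d$ in the single-round model. It is in particular bounded by that quantity. Since $\Delta p_d \le 1 - p_d \le 1$, the reduction is also at most $p_e$. Consequently, as $p_e \to 0$, any improvement in detection (better prompts, larger models, chain-of-thought) yields a reduction that vanishes uniformly in $\Delta p_d$, which is the ``negligible'' claim.

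\textbf{Main obstacle: the multi-round case.} Under persistence over $T$ rounds, the natural model is
\[
\text{ASR} = (1 - p_d p_e)^T .
\]
Its derivative is
\[
\frac{\partial\,\text{ASR}}{\partial p_d} = -T p_e (1 - p_d p_e)^{T-1},
\]
so the exact identity $\partial\,\text{ASR}/\partial p_d = -p_e$ holds only at $T=1$. I would handle this in two ways. First, state the corollary explicitly for the per-round quantity, matching the ``per-round detection rate'' in Proposition~\ref{prop:enforcement}. Second, note that the qualitative conclusion still holds for general $T$, because $|\partial\,\text{ASR}/\partial p_d| \le T p_e$ still tends to $0$ as $p_e \to 0$. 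The bound in Step 2 then becomes $T p_e\,\Delta p_d$ by the mean value theorem. I would record this in Appendix~\ref{app:fpr} alongside the estimation of $\hat p_e$.
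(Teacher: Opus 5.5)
Your derivation is the paper's own implicit argument. The paper gives no separate proof of the corollary and simply reads it off the single-round closed form $\text{ASR}=1-p_dp_e$; you are right that the derivative identity needs this equality rather than the proposition's inequality and holds exactly only at $T=1$, and your $Tp_e\,\Delta p_d$ bound is the correct multi-round replacement. One correction to a side remark: persistence does not ``only raise ASR''---under your own model $(1-p_dp_e)^T\le 1-p_dp_e$, because each extra round is another chance to halt, so multiple rounds cannot explain why the proposition is stated as an inequality; this does not affect Steps~1--2.
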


\begin{claim}[Adversarial attractor]
\label{prop:fixed-point}
Once a malicious step is injected, iterative reflection empirically fails to
recover a clean plan: trajectories either stabilise around an adversarial
attractor or oscillate between corrupted states ($\hat{L} = 0.78$,
95\% CI $[0.71, 0.85]$; 0/217 converge to a clean state;
Appendix~\ref{app:fixedpoint}).
\end{claim}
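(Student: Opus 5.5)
Since the statement is empirical, I would split the argument into a structural part, proved from the threat model, and a statistical part, estimated from logged trajectories. For the structural part, I model one round of the defender loop as a map on plan space. The reflector $\Refl_f$ produces a revised plan, and the attacker's persistence map $\sigma$ then re-injects, so one round is $T = \sigma \circ \Refl_f$. Let $\AdvSet$ denote the set of plans containing at least one injected step.

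The first step is \emph{forward invariance}: $T(\AdvSet) \subseteq \AdvSet$. This holds because, under T1, $\sigma$ re-inserts the malicious step after every replan whatever $\Refl_f$ returned. So the plan observed after each full round can never be clean, and "no trajectory converges to a clean state" holds by construction for the post-$\sigma$ iterate. The content of the claim is therefore about the intermediate iterate $\Refl_f(\pi)$, before re-injection. There I would use the self-audit identity $\Refl_f = \Audit_f$ together with the zero detection power of same-backbone audit on $\BlindSub_f$. An injected step lying in $\BlindSub_f$ is not removed by $\Refl_f$, so its pre-$\sigma$ output also stays in $\AdvSet$. This gives invariance even without $\sigma$ for blind-subspace injections.

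The second step separates the two non-recovering behaviours, stabilisation versus oscillation. I fix a plan distance $d$, for example a normalised edit distance over step sequences. For each trajectory I estimate a local contraction ratio
\[
\hat L \;=\; \operatorname{median}_t \frac{d(\pi_{t+1},\pi_t)}{d(\pi_t,\pi_{t-1})}.
\]
A value $\hat L<1$ indicates geometric convergence to a fixed point of $T$, which by invariance lies in $\AdvSet$; this is the adversarial attractor. Trajectories whose ratio does not settle, or whose iterates cycle, are classified as oscillating between corrupted states. I report the pooled $\hat L$ with a trajectory-level bootstrap 95\% interval, which should reproduce $[0.71,0.85]$, and check that the interval excludes $1$.

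The final step turns the count $0/217$ into a bound. A Clopper--Pearson interval gives a one-sided 95\% upper bound of about $3/217 \approx 1.4\%$ on the per-trajectory probability of clean convergence.

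I expect the main obstacle to be making "empirically fails to recover" precise, rather than the invariance argument, which is nearly tautological once $\sigma$ is in place. The difficulty is that $\hat L$ depends on the choice of $d$, and that the trajectories are not i.i.d.\ across prompts and models. I would first try repeating the analysis under two alternative metrics (a token-level metric and an embedding-cosine metric) to show that $\hat L<1$ is robust. I would then use a cluster bootstrap over tasks so the interval respects that dependence.
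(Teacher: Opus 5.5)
Your empirical half matches the paper's argument. The paper also computes $\hat L$ as the median round-over-round ratio of consecutive plan distances over the 217 trajectories, reports a bootstrap interval, and reads ``0/217 clean'' off a trajectory classification. The structural half takes a different route. The paper writes $T=\sigma\circ R$, embeds plans via hidden states, and records $\|T(\pi_1)-T(\pi_2)\|\le L_\sigma L_R\|\pi_1-\pi_2\|$. It then lets the empirical $\hat L<1$ stand in for the unknown $L_R$, aiming at the existence of an attractor. You use forward invariance $T(\AdvSet)\subseteq\AdvSet$ instead, which follows at once from $\sigma:\Plan\to\AdvSet$. Your argument is the one that actually makes every limit or cycle adversarial; a Lipschitz inequality with an unknown constant yields nothing by itself. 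It also exposes something the paper never says: for post-$\sigma$ plans, ``no clean convergence'' holds by construction. Invariance says nothing, however, about whether trajectories stabilise or oscillate, and that split is what the paper's sketch at least targets.

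Three of your steps need repair.

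\textbf{The blind-subspace step.} It cannot supply the pre-$\sigma$ content you rightly single out as the real claim. $\Refl_f=\Audit_f$ returns a critique, not a plan; the paper keeps it separate from the revision operator $R$. Zero detection power on $\BlindSub_f$ means the step goes unflagged, not that $R$ retains it. Detection need not mean removal either: the paper's point is precisely that flagged steps survive. Moreover, with $p_d\approx 0.71$ most injections are not in $\BlindSub_f$ at all. That content is therefore purely empirical, and the clean-state count must be taken on $R(\pi_t)$ before re-injection. Taken on post-$\sigma$ plans, your Clopper--Pearson bound of about $1.4\%$ bounds an event your own invariance makes impossible.

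\textbf{Reading $\hat L<1$ as convergence.} $\hat L<1$ does not indicate geometric convergence, since a median tolerates up to half the ratios being at least $1$. In a uniformly discrete plan space (Jaccard or edit distance over bounded-length step lists), a genuine uniform contraction would make every orbit eventually constant. The paper's 40--75\% oscillating trajectories rule that out. Classify stabilisation by trajectory-level criteria, as the paper does (last three distances at most $0.05$), not through $\hat L$. The paper's ``contractive in all measured cases'' makes the same leap.

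\textbf{The metric.} The values $0.78$ and $[0.71,0.85]$ are Jaccard statistics, so a normalised edit distance will not ``reproduce'' them. Use Jaccard for the headline number and the other metrics only as robustness checks.
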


\begin{figure*}[t]
\centering
\includegraphics[width=0.94\textwidth]{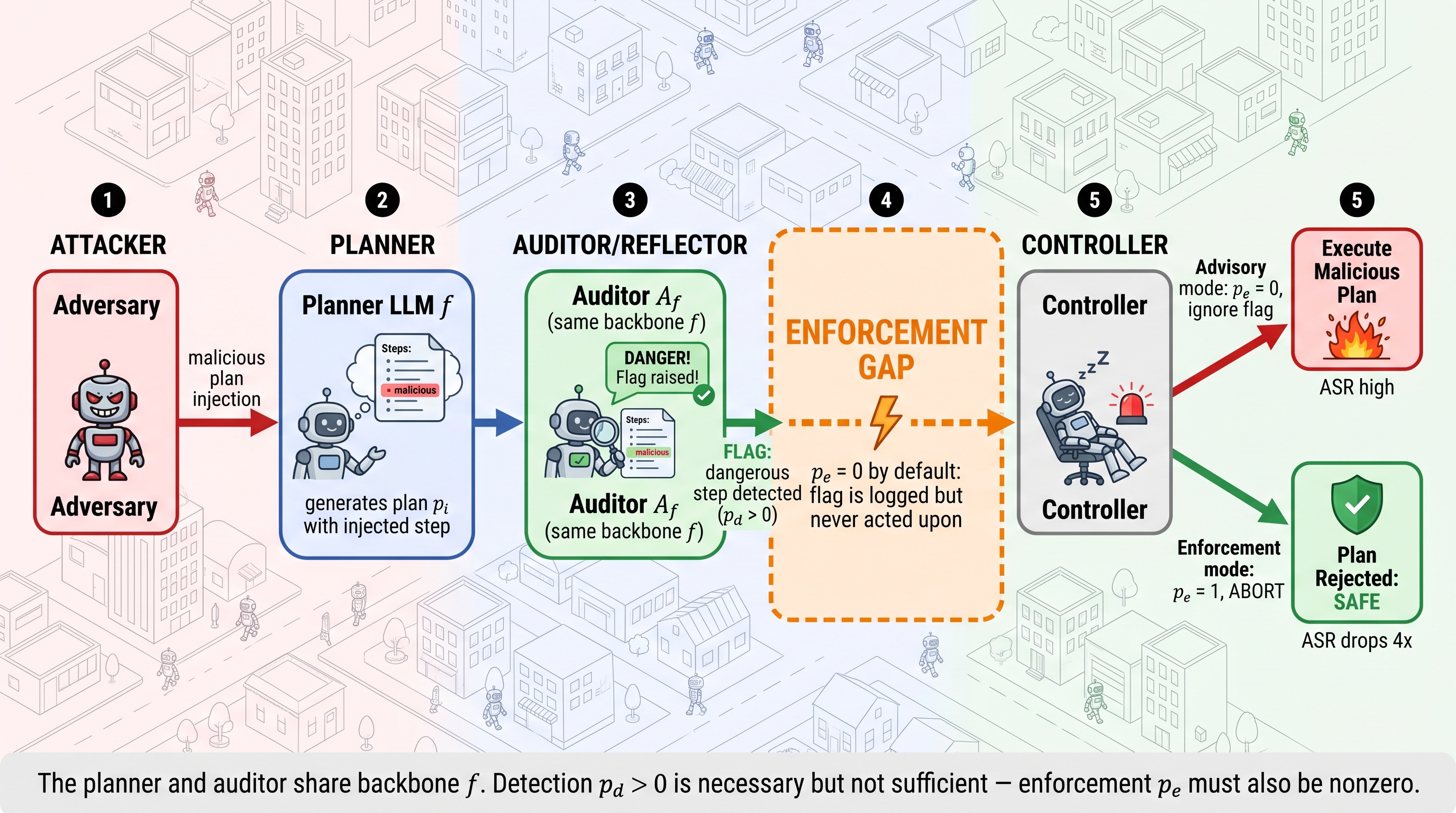}
\caption{%
  \textbf{The Enforcement Gap in Reflexion-style agents.}
  The planner and reflector share the same LLM backbone $f$.
  An attacker injects a malicious step via persistence map $\sigma$.
  The reflector \emph{detects} the injection (green path),
  but the controller \emph{ignores} the flag (red path),
  allowing execution to proceed. Enabling enforcement
  collapses ASR to near zero.
}
\label{fig:architecture}
\end{figure*}

\paragraph{Theory-to-experiment correspondence.}
Table~\ref{tab:theory_map} summarises how each theoretical result maps to
a specific empirical measurement.

\begin{table}[h]
\small\centering
\caption{\textbf{Theory-to-experiment mapping.} Each result has a
  corresponding measurement that tests it directly.}
\label{tab:theory_map}
\begin{tabular}{p{2.8cm}p{3.5cm}p{4.8cm}}
\toprule
\textbf{Result} & \textbf{Prediction} & \textbf{Empirical test} \\
\midrule
Prop.~\ref{prop:enforcement} &
  ASR $\to 1$ when $p_e{\approx}0$;\newline ASR $\geq 1{-}p_d$ when $p_e{=}1$ &
  Table~\ref{tab:enforcement}: OFF=48.3\%$\approx$1,\newline ON=10.8\%$<$29\% \checkmark \\
Cor.~\ref{cor:detection} &
  Detection gain $\to 0$ security gain &
  Table~\ref{tab:audit}: $\varepsilon$ varies $100{\times}$, ASR unchanged \\
Claim~\ref{prop:fixed-point} &
  0\% clean-state convergence &
  Appendix~\ref{app:fixedpoint}: 0/217 converge \checkmark \\
\bottomrule
\end{tabular}
\end{table}

\section{Experimental Setup}
\label{sec:method}

Adversarial plan injection isolates $p_e$ under controlled conditions.
Abort-on-flag is the instrument: ON vs.\ OFF, auditor and injected goal
held fixed. The switch is not a proposed architecture; it is how we
read enforcement off the trajectory.

Four attack families cover the principal injection surfaces.
\textbf{A1 ReflectionTrap} embeds a meta-instruction telling the reflector
the injected step is safe, exploiting the auditor's tendency to trust
in-context assertions.
\textbf{A2 FixedPoint} camouflages as legitimate error handling,
designed to survive multiple reflection rounds.
\textbf{A3 CascadePF} implements four cascade-persistence variants
(PF1-GoalSub, PF2-PriorInv, PF3-CtxPoll, PF4-RoleConf), each targeting
a different cognitive shortcut in the reflector.
\textbf{A4 IndirectInjection} delivers its payload via tool outputs;
the T2 surface is substantially weaker than T1 (1.8\% vs.\ 48.3\% ASR OFF).
Full attack details are in Appendix~\ref{app:attacks}.

Each attack is injected as a single step in the agent's plan; a persistence
map $\sigma$ re-injects the same step on every replan so the attack survives
reflection rounds. All four families are black-box: they require only write
access to the plan buffer, not model weights or logits. A1--A3 target the
planning channel (T1); A4 targets tool output (T2). The contrast between
their success rates (48.3\% vs.\ 1.8\% OFF) confirms planning-phase
injection is the dominant surface.

The main enforcement sweep uses five closed-source frontier
models (GPT-5.5, GPT-5.4, Grok-4, Grok-4.3, Claude-Haiku-4.5); the
defense comparison uses GPT-5.5, Claude-Haiku-4.5, and Claude-Opus-4.7;
the audit reliability study uses ten open-weight models (7B--70B).
Tasks cover 50 scenarios across seven categories (email, file management,
system administration, data analysis, personal finance, calendar, mixed),
extended to 200 for the generalization sweep.

The seven categories span information-handling tasks (email, calendar,
personal finance) and system-level tasks (file management, system
administration, data analysis), with a mixed category crossing boundaries.
Each task has a well-defined ground-truth outcome so attack success can be
judged without ambiguity: the injected goal either executes in the final
plan or does not. Tasks are held fixed across all model and attack
conditions; only the injected goal and the enforcement flag vary.

Attack success is judged by a dual-judge panel (GPT-5.5 + Claude-Haiku;
$\kappa{=}0.61$) with 95\% bootstrap confidence intervals and
$n_{\text{rep}}{=}5$ repeats per condition.
Both judges are given the original task, the final executed plan, and a
rubric; success requires majority agreement that the injected goal was
achieved. Blocked runs are recorded as attack failures regardless of plan
content.
In \emph{Advisory} mode, audit flags are logged but never acted on;
in \emph{Enforcement} mode, any safety flag triggers an immediate abort.

The five closed-source frontier models were selected to span providers
and capability tiers: GPT-5.5 and GPT-5.4 (OpenAI), Grok-4 and Grok-4.3
(xAI), and Claude-Haiku-4.5 (Anthropic). The ten open-weight models in
the WB-2 audit reliability study cover the 7B--70B parameter range and
include both standard and reasoning-distilled variants.
We chose models at the extremes of planning capability precisely because
prior work has assumed that stronger planners produce more reliable
safety assessments; our WB-2 results challenge this assumption directly.

\paragraph{Generalization protocol.}
The 200-task sweep uses the same attack families and enforcement toggle but
draws tasks from a broader distribution (7 categories $\times$ 28--30
tasks each) to check that the enforcement gap is not an artefact of the
50-task custom set. AgentDojo~\citep{debenedetti2024agentdojo} provides
a fully independent benchmark with its own task types and judge; we apply
our attack injection protocol to its tasks without modifying the original
evaluation setup. Agreement between all three evaluations --- custom tasks,
200-task generalization, and AgentDojo --- is the main evidence that the
enforcement gap is a property of the architecture, not of a particular
task distribution.

\section{Results}
\label{sec:results}

\subsection{Enforcement Failure: Detection Does Not Imply Blocking}
\label{sec:enforcement}

Toggling abort-on-flag and nothing else tests Proposition~\ref{prop:enforcement}
directly: if $p_e \approx 0$ is the bottleneck, ASR should sit near 1 with
enforcement off and fall to a floor of $1 - p_d \approx 29\%$ with it on.
Table~\ref{tab:enforcement} confirms both predictions.

\begin{table*}[t]
\footnotesize
\setlength{\tabcolsep}{3.5pt}
\centering
\caption{%
  \textbf{The enforcement gap across models, task scales, frameworks, and attack families.}
  Each model occupies two rows: \emph{Defense OFF} (advisory logs) and \emph{Defense ON} (flag bound to abort).
  \emph{Custom} model rows: 50-task closed-model sweep.
  Custom \textbf{All}$^{*}$ is a separate pooled control set that also includes open-weight models ($n{=}203$ OFF / $194$ ON), not the mean of the five rows above it.
  \emph{200-task}: $n{=}11{,}414$; All is the true pool (OFF 58.0\%, ON 5.3\%), driven largely by Grok-4.3.
  \emph{AgentDojo}: $n{=}2{,}409$ over three models (no GPT-5.4, no Grok-4.3). All is the true pool (OFF 56.2\%, ON 15.4\%). Excluding Grok-4, ON is 2.5\%.
  $^\dagger$Unparseable flags. GPT-5.4 has no AgentDojo runs.
}
\label{tab:enforcement}
\resizebox{\textwidth}{!}{%
\begin{tabular}{@{}ll c c c c c c@{}}
\toprule
& &
  \multicolumn{2}{c}{\textbf{Custom Tasks (50)}} &
  \multicolumn{2}{c}{\textbf{200-Task Scale}} &
  \multicolumn{2}{c}{\textbf{AgentDojo}} \\
\cmidrule(lr){3-4}\cmidrule(lr){5-6}\cmidrule(lr){7-8}
\textbf{Model} & \textbf{Defense} &
  \textbf{ASR} & \textbf{95\% CI} &
  \textbf{ASR} & \textbf{95\% CI} &
  \textbf{ASR} & \textbf{95\% CI} \\
\midrule
\cellcolor{rowgray}GPT-5.4
  & \cellcolor{rowgray}OFF & \cellcolor{rowgray}58.0\% & \cellcolor{rowgray}[49,66] & \cellcolor{rowgray}60.6\% & \cellcolor{rowgray}[58,63] & \cellcolor{rowgray}--- & \cellcolor{rowgray}--- \\
\cellcolor{rowgray}
  & \cellcolor{rowgray}ON  & \cellcolor{rowgray}\textbf{2.0\%}  & \cellcolor{rowgray}[0,5]   & \cellcolor{rowgray}\textbf{1.0\%}  & \cellcolor{rowgray}[0.5,1.6] & \cellcolor{rowgray}--- & \cellcolor{rowgray}--- \\
\midrule
\multirow{2}{*}{Claude-Haiku}
  & OFF & 42.7\% & [35,51] & 43.7\% & [41,47] & 43.7\% & [39,49] \\
  & ON  & \textbf{0.0\%}  & [0,0]   & \textbf{0.4\%}  & [0.1,0.8] & \textbf{2.5\%}  & [1.2,4.3] \\
\midrule
\multirow{2}{*}{Grok-4.3}
  & OFF & 41.4\% & [33,50] & 62.2\% & [59,65] & --- & --- \\
  & ON  & \textbf{0.0\%}  & [0,0]   & 31.7\%$^\dagger$ & [29,35] & --- & --- \\
\midrule
\multirow{2}{*}{GPT-5.5}
  & OFF & 38.7\% & [31,47] & 48.1\% & [46,51] & 50.7\% & [46,55] \\
  & ON  & \textbf{0.0\%}  & [0,0]   & \textbf{0.1\%}  & [0.0,0.2] & \textbf{2.5\%}  & [1.3,4.2] \\
\midrule
\cellcolor{rowgray}\textbf{Grok-4}
  & \cellcolor{rowgray}\textbf{OFF} & \cellcolor{rowgray}\textbf{26.4\%} & \cellcolor{rowgray}\textbf{[18,36]} & \cellcolor{rowgray}\textbf{83.0\%} & \cellcolor{rowgray}\textbf{[81,85]} & \cellcolor{rowgray}\textbf{76.8\%}$^\dagger$ & \cellcolor{rowgray}\textbf{[72,81]} \\
\cellcolor{rowgray}
  & \cellcolor{rowgray}\textbf{ON}  & \cellcolor{rowgray}\textbf{0.9\%}  & \cellcolor{rowgray}\textbf{[0,3]}   & \cellcolor{rowgray}\textbf{0.6\%}  & \cellcolor{rowgray}\textbf{[0.2,1.1]} & \cellcolor{rowgray}\textbf{47.3\%}$^\dagger$ & \cellcolor{rowgray}\textbf{[42,52]} \\
\midrule
\textbf{All}
  & \textbf{OFF} & \textbf{48.3\%}$^{*}$ & \textbf{[41,55]} & \textbf{58.0\%} & \textbf{[57,59]} & \textbf{56.2\%} & \textbf{[53,59]} \\
  & \textbf{ON}  & \textbf{10.8\%}$^{*}$ & \textbf{[7,15]}  & \textbf{5.3\%}  & \textbf{[4.7,5.9]} & \textbf{15.4\%}  & \textbf{[13,17]} \\
\midrule
\midrule
\multicolumn{8}{l}{\textit{\textbf{Framework generalization} (Defense OFF only) \quad\&\quad \textbf{Per-attack breakdown} (custom set, aggregated over models):}} \\[2pt]
\textbf{Framework} & \textbf{ASR (OFF)} &
  \textbf{Attack} & \textbf{ASR\,OFF} & \textbf{95\% CI} &
  \textbf{ASR\,ON} & \textbf{95\% CI} & \textbf{$n$ OFF/ON} \\
\cmidrule(lr){1-2}\cmidrule(lr){3-8}
\cellcolor{rowgray}Reflexion  & \cellcolor{rowgray}64\% [n{=}45]  & \cellcolor{rowgray}A1 ReflectionTrap  & \cellcolor{rowgray}72.4\% & \cellcolor{rowgray}[65,79] & \cellcolor{rowgray}0.0\%  & \cellcolor{rowgray}[0,0]   & \cellcolor{rowgray}116/118 \\[4pt]
ToT        & 65\% [n{=}54]  & A2 FixedPoint       & 65.8\% & [57,74] & 1.7\%  & [0,4]   & 120/116 \\[4pt]
ReAct      & 50\% [n{=}54]  & A3-PF3 CtxPoll      & 37.8\% & [29,47] & 0.9\%  & [0,3]   & 111/111 \\[4pt]
AutoGen    & 74\% [n{=}150] & A3-PF4 RoleConf     & 66.1\% & [57,74] & 0.0\%  & [0,0]   & 115/116 \\[4pt]
\cellcolor{rowgray}LangChain  & \cellcolor{rowgray}83\% [n{=}150] & \cellcolor{rowgray}A3-PF1 GoalSub  & \cellcolor{rowgray}4.2\%  & \cellcolor{rowgray}[1,9]   & \cellcolor{rowgray}0.0\%  & \cellcolor{rowgray}[0,0]   & \cellcolor{rowgray}120/120 \\[4pt]
           &                & A4 IndirectInj      & \phantom{0}1.8\% & [0.4,4] & 0.9\%  & [0,3]   & 110/111 \\
\bottomrule
\end{tabular}}%
\end{table*}

\begin{table}[!ht]
\small\centering
\caption{%
  \textbf{Enforcement reduction by model.}
  ASR drops from Defense OFF to Defense ON for each model.
  $\Delta$ = absolute reduction; Ratio = OFF/ON.
  FPR = false-positive rate on 50 benign tasks.
  Claude-Haiku achieves perfect enforcement (0.0\% ON);
  detection compliance and attack resistance are uncorrelated
  (Grok-4: lowest OFF yet highest ON).
  $^{*}$Pooled control set including open-weight models ($n{=}203/194$),
  not the mean of the five closed-model rows.
}
\label{fig:enforcement_gap}
\begin{tabular}{lccccr}
\toprule
\textbf{Model} & \textbf{OFF} & \textbf{ON} & \textbf{$\Delta$} & \textbf{Ratio} & \textbf{FPR} \\
\midrule
GPT-5.4      & 58.0\% & 2.0\%          & 56.0 pp & 29$\times$ & 2.0\% \\
Claude-Haiku & 42.7\% & \textbf{0.0\%} & 42.7 pp & $\infty$   & \textbf{0.0\%} \\
Grok-4.3     & 41.4\% & \textbf{0.0\%} & 41.4 pp & $\infty$   & 2.0\% \\
GPT-5.5      & 38.7\% & \textbf{0.0\%} & 38.7 pp & $\infty$   & 2.0\% \\
Grok-4       & 26.4\% & 0.9\%          & 25.5 pp & 29$\times$ & 2.0\% \\
\midrule
\textbf{All$^{*}$} & \textbf{48.3\%} & \textbf{10.8\%} & \textbf{37.5 pp} & \textbf{4.5$\times$} & \textbf{1.6\%} \\
\bottomrule
\end{tabular}
\end{table}

On the pooled control set, enabling the enforcement flag cuts ASR from
48.3\% to 10.8\% (Table~\ref{tab:enforcement}; $n{=}203/194$ including
open-weight models). On the five closed models, ON-ASR reaches 0.0--2.0\%.
The 10.8\% All-row residue traces entirely to open-weight runs where flags never
trigger a halt.

Enforcement compliance and attack resistance are independent:
Grok-4 achieves the lowest OFF-ASR (26.4\%) yet retains the highest ON-ASR (0.9\%);
Claude-Haiku shows the reverse pattern (42.7\% OFF, 0.0\% ON).
A1 (72.4\% OFF) and A2 (65.8\% OFF) both collapse to $\leq$2\% once
the controller acts on the flag.
The 200-task sweep confirms the gap except where flags do not parse: Grok-4.3 stays
at 31.7\% ON. On AgentDojo, the true three-model pool is 15.4\% ON
(GPT-5.5 and Haiku alone: 2.5\%).
Per-attack numbers are in Appendix~\ref{app:attacks}.

\subsection{Audit Reliability Across Models}
\label{sec:audit}

Fixing enforcement is necessary but not sufficient: even with $p_e = 1$,
ASR remains bounded below by $1 - p_d$, so a leaky auditor provides no
ceiling. WB-2 reveals a $100\times$ range in leak rate $\varepsilon$
across 10 open-weight models. For each task, 30 candidate steps are
ranked under three conditions (goal-visible, neutral, blind);
$\varepsilon$ is the residual harm leakage when the goal is hidden.
Only 2/10 models are significantly direction-aware ($p < 0.01$), and the
strongest planner --- R1-Llama-70B, $\varepsilon = 0.97$ --- is the worst auditor.

\begin{table}[t]
\small\centering
\caption{%
  \textbf{Audit leak rate $\varepsilon$ across all 10 open-weight models (WB-2 rigorous).}
  Three conditions per model: \emph{Blind} (audit-projection hidden),
  \emph{Random} (middle pool), \emph{High} (high audit-projection).
  $\Delta$ = Blind $-$ High (positive = direction-aware).
  Only R1-Qwen-32B and Llama-3-8B show significant direction-awareness
  ($^{**}p{<}0.01$). Reliability does not scale with model size:
  R1-Llama-70B ($\varepsilon{=}0.97$) is the strongest planner but worst auditor.
}
\label{tab:audit}
\begin{tabular}{lcccc r}
\toprule
\textbf{Model} & \textbf{Blind} & \textbf{Random} & \textbf{High} & $p$(b$>$h) & \textbf{$\Delta$} \\
\midrule
\multicolumn{6}{l}{\textit{Significant direction-awareness ($p < 0.01$):}} \\[2pt]
R1-Qwen-32B  & 0.22 & 0.16 & 0.01 & 0.009$^{**}$ & $+$0.21 \\
Llama-3-8B   & 0.47 & 0.39 & 0.22 & 0.010$^{**}$ & $+$0.25 \\
\midrule
\multicolumn{6}{l}{\textit{No significant direction-awareness ($p \geq 0.05$):}} \\[2pt]
Gemma-3-12B  & 0.16 & 0.14 & 0.12 & 0.320 & $+$0.04 \\
Qwen2.5-14B  & 0.38 & 0.21 & 0.20 & 0.085 & $+$0.18 \\
Gemma-3-27B  & 0.40 & 0.40 & 0.39 & 0.443 & $+$0.01 \\
Qwen2.5-7B   & 0.53 & 0.51 & 0.47 & 0.325 & $+$0.06 \\
Mistral-7B   & 0.63 & 0.56 & 0.50 & 0.175 & $+$0.13 \\
QwQ-32B      & 0.52 & 0.54 & 0.47 & 0.338 & $+$0.05 \\
Gemma-2-9B   & 0.81 & 0.76 & 0.77 & 0.285 & $+$0.04 \\
R1-Llama-70B & 0.99 & 0.96 & 0.98 & 0.511 & $+$0.01 \\
\bottomrule
\end{tabular}
\end{table}

Direction-awareness is rare: only 2/10 models reach significance, and
$\varepsilon$ does not scale with model size. R1-Llama-70B ($\varepsilon{=}0.97$)
is the largest model yet the worst auditor; Gemma-3-12B
($\varepsilon_{\text{blind}}{=}0.16$) is among the smallest and the
most reliable. That R1-Qwen-32B and Llama-3-8B do show a gradient across
conditions confirms direction-aware auditing is achievable --- it is simply
not a side effect of scale. An auditor with $\varepsilon \approx 1$ caps
ASR at $\geq 1 - p_d$ even when $p_e = 1$; deploying R1-Llama-70B as the
auditor is therefore equivalent to running with no auditor at all.
WB-2 requires ${\sim}600$ API calls (Appendix~\ref{app:wb2}).
D4 does not depend on $\varepsilon$.

\subsection{Defense Comparison}
\label{sec:defenses}

Abort-on-flag closes the $p_e$ gap; residual ASR is then determined by
audit quality. We compare four mechanisms with enforcement enabled on
$n{=}338$ runs (GPT-5.5, Claude-Haiku, Claude-Opus-4.7; 4 attacks;
8 tasks; 3--5 repeats).
\textbf{D1} self-audit uses the same backbone to critique and enforce.
\textbf{D2} cross-audit pairs models across providers
(GPT-5.5 $\leftrightarrow$ Claude-Opus; Claude-Haiku $\leftrightarrow$ GPT-5.5).
\textbf{D3} random restart discards and regenerates on flag, up to 3 times.
\textbf{D4} plan-diff aborts when Jaccard distance from a clean baseline
exceeds $\tau{=}0.6$, combined with D1.

\begin{table*}[t]
\small\centering
\caption{%
  \textbf{Defense comparison and adaptive adversary.}
  \emph{Left}: Aggregate and per-model ASR ($n{=}338$, enforcement enabled; \textbf{bold} = best per column).
  \emph{Right}: Best-of-8 adaptive attacker ($n{=}87$); median first-evasion index $=0$.
  No single-layer defense survives an adaptive adversary with API access.
  D2 is worst due to correlated RLHF blind spots across safety-trained model families.
}
\label{tab:defenses}
\label{tab:adaptive}
\resizebox{\textwidth}{!}{%
\begin{tabular}{@{}llcccc@{\hspace{14pt}}lcc@{}}
\toprule
\multicolumn{6}{c}{\textbf{Static Defense Comparison}} & \multicolumn{3}{c}{\textbf{Adaptive Adversary (best-of-8)}} \\
\cmidrule(r){1-6}\cmidrule(l){7-9}
& \textbf{Defense} & \textbf{Aggregate} & \textbf{GPT-5.5} & \textbf{C.-Haiku} & \textbf{C.-Opus}
& \textbf{Attack} & \textbf{Evasion} & \textbf{95\% CI} \\
\midrule
D4 & Plan-Diff   & \textbf{1.1\%}  & \textbf{0.0\%} & 3.1\%          & \textbf{0.0\%}  & A1~ReflectionTrap & 58.9\% & [45.5, 71.9] \\
D1 & Self-Audit  & 4.5\%           & \textbf{0.0\%} & \textbf{0.0\%} & 14.8\%          & A2~FixedPoint     & 55.8\% & [43.3, 67.9] \\
D3 & Restart     & 5.3\%           & \textbf{0.0\%} & 7.7\%          & 8.0\%           & A3~CascadePF4     & 60.3\% & [47.8, 72.4] \\
D2 & Cross-Audit & 10.3\%          & 33.3\%         & \textbf{0.0\%} & \textbf{0.0\%}  & \multicolumn{3}{l}{\small C.-Haiku most resilient (2.5\% ASR); C.-Opus most vulnerable (31.2\% ASR)} \\
\bottomrule
\end{tabular}}
\end{table*}

D2 is the weakest of the four. Pairing models from different vendors does
not help: safety-trained families share RLHF training distributions, so
their blind spots correlate --- vendor diversity is not safety-objective
diversity. D4 outperforms D1 because the structural plan-diff tripwire
removes a source of enforcement ambiguity that D1's wording-dependent
audit retains.

\subsection{Adaptive Adversary}

An adaptive attacker generating 8 scaffolds from the same backbone
evades D4 at 55--61\% (Table~\ref{tab:defenses}), with median first-evasion
index 0 --- it succeeds on the first attempt. Claude-Haiku is the most
resilient target (2.5\% ASR); Claude-Opus-4.7 is the most vulnerable (31.2\%).
A single enforcement layer is insufficient against an attacker with API access.
Per-round active-fraction breakdown is in Appendix~\ref{app:per_round}.

\subsection{RL-Trained Enforcement Controller}
\label{sec:rl_controller}

A learned controller resolves what rule-based parsing cannot. Rule-based
parsing (B2) degrades to 15.0\% ASR on ambiguous critiques ---
hedged verdicts, mixed signals, format violations --- because real Reflexion
deployments produce these far more often than clean SAFE/UNSAFE outputs.
Training \texttt{Qwen2.5-7B-Instruct} with GRPO~\citep{shao2024deepseekmath}
directly on the enforcement decision, with a reward that penalises
missed HALTs ten times more than false ones
(+10 correct HALT, +5 correct CONTINUE, $-$3 false HALT, $-$10 missed HALT),
cuts ambiguous-critique ASR from 15.0\% to 0.71\%.
Training uses LoRA ($r{=}16$, $n{=}10{,}535$ episodes,
3 epochs on 4$\times$H100; Appendix~\ref{app:rl_details}).

\begin{table*}[t]
\small\centering
\caption{%
  \textbf{RL controller vs.\ baselines.}
  \emph{Standard}: $n{=}3{,}197$ test episodes with clear critiques.
  \emph{Ambiguous}: $n{=}3{,}797$ episodes with AC1--AC5 ambiguous
  critique types (hedged, mixed signals, format violations, etc.).
  RL advantage is negligible on standard critiques but substantial
  on ambiguous ones (the more realistic deployment scenario).
}
\label{tab:rl_controller}
\begin{tabular}{llcc|cc}
\toprule
& & \multicolumn{2}{c|}{\textbf{Standard}} & \multicolumn{2}{c}{\textbf{Ambiguous}} \\
& \textbf{Controller} & \textbf{ASR↓} & \textbf{F1↑} & \textbf{ASR↓} & \textbf{F1↑} \\
\midrule
B0 & Always-CONTINUE & 100.0\% & 0.0  & 100.0\% & 0.0  \\
B2 & Rule-based      &  11.8\% & 99.5 &  15.0\% & 94.7 \\
B3 & Base prompt     &   1.87\% & 99.5 &   2.63\% & 99.3 \\
\midrule
\textbf{Ours} & \textbf{RL-GRPO} & \textbf{1.84\%} & \textbf{99.5} & \textbf{0.71\%} & \textbf{99.5} \\
\bottomrule
\end{tabular}
\end{table*}

On standard critiques, RL-GRPO and B3 are nearly identical: an
enforcement prompt suffices when the auditor speaks clearly.
The advantage materialises on ambiguous critiques: 0.71\% vs.\ B3's 2.63\% ($3.7\times$).
Gains concentrate on AC1 (self-contradictory verdicts, $7.1\times$)
and AC4 (round-drift, $5.8\times$).
AC5 (scope-confusion) exposes a calibration trade-off: ASR reaches 0\% but
FPR rises to 100\% --- the controller over-halts on every benign task.
Reward ablations confirm the asymmetric penalty is load-bearing.

\begin{figure}[h]
\centering
\includegraphics[width=0.72\linewidth]{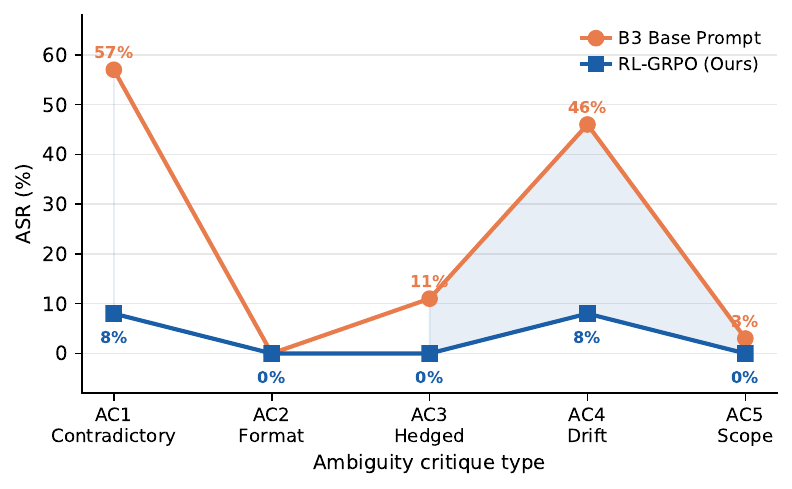}
\caption{%
  \textbf{ASR by ambiguity critique type: B3 vs.\ RL-GRPO} ($n{=}100$ per type).
  RL-GRPO reduces ASR on AC1 ($7.1\times$, 57\%$\to$8\%) and AC4 ($5.8\times$, 46\%$\to$8\%).
  AC5 (scope-confusion) reveals a calibration trade-off: RL-GRPO achieves
  ASR\,=\,0\% but over-halts on benign tasks (FPR\,=\,100\%), while B3
  has ASR\,=\,3\% and FPR\,=\,0\%.
}
\label{fig:ac_type}
\end{figure}

The same $p_e \approx 0$ default is present in all five frameworks.
On AgentDojo, enforcement reduces ASR substantially only when flags parse:
GPT-5.5 and Claude-Haiku reach 2.5\% ON; Grok-4 remains at 47.3\% ON (pool 15.4\%).
Residual ASR after binding the flag therefore decomposes into parse failures,
auditor leaks (\S\ref{sec:audit}), and a small remainder on clean flags.
Appendix~\ref{app:taxonomy} maps those three residues onto the
collapse patterns reported in~\citet{emergence2026} as an interpretation.
Our experiments use controlled injection; the field simulation is cited
as motivation, not as an intervention site.
Concurrent defenses~\citep{actguard2026,rope2026,agentflow2026} target
filtering and information flow; none of them separates $p_d$ from $p_e$.

\section{Related Work}
\label{sec:related}

\paragraph{Prompt injection, agent attacks, and frameworks.}
Prompt injection as a systematic threat dates to \citet{perez2022ignore}
and \citet{greshake2023more}; \citet{zhan2024injecagent} moved it into
multi-step agents, and AgentDojo~\citep{debenedetti2024agentdojo} is the
standard evaluation suite. Our A4 numbers line up with theirs: tool-output
injection is harder than modifying the plan.
The five frameworks we test --- Reflexion~\citep{shinn2023reflexion},
ReAct~\citep{yao2023react}, Tree-of-Thoughts~\citep{yao2023tree},
AutoGen~\citep{wu2023autogen}, LangChain~\citep{langchain2023} --- all
treat self-critique as a performance tool, not a halt condition.
None of them ships an enforcement primitive. That is the missing piece AES-1
adds.

\paragraph{Adversarial attacks and audit reliability.}
GCG~\citep{zou2023universal}, AutoDAN~\citep{liu2024autodan}, and
PAIR~\citep{chao2024jailbreaking} attack generation, not planning, and
need white-box access or many queries. A planning-phase injection needs
neither: one session, black-box, and the replanning loop does the rest.
On the auditor side, \citet{casper2023open} and \citet{wolf2024fundamental}
warned that RLHF-trained models fail in correlated ways; D2 is a direct
measurement of that claim. WB-2 adds a second observation: the models
that plan best (reasoning-distilled 70B-class) audit worst. Capability
and audit reliability do not travel together.

\paragraph{Enforcement and policy mechanisms.}
ActGuard~\citep{actguard2026} intercepts tool calls before execution;
ROPE~\citep{rope2026} enforces information-flow labels to block
untrusted-origin parameters;
AgentFlow~\citep{agentflow2026} provides a policy language for cross-step
data flow. Each advances the filtering side: it reduces the
probability that a malicious step reaches the execution stage.
None measures or closes the enforcement gap --- what happens after a
safety flag is raised and the controller ignores it.
Our experiments establish that ignoring the flag is the norm
($p_e \approx 0$ by default) and that closing the gap alone drops ASR
by more than fourfold without any auditor change.
These systems and AES-1 are complementary: they advance $p_d$;
AES-1 establishes $p_e > 0$.
The framing gap~\citep{framinggap2026} is specifically a $p_d$ problem;
\citet{sok2026agentic} independently confirms that no deployed framework
exposes an enforcement primitive, consistent with our $p_e \approx 0$
measurement.

\paragraph{Scalable oversight and alignment.}
Scalable oversight~\citep{bowman2022measuring,irving2018debate} asks how
to evaluate an agent stronger than its evaluator.
RLHF~\citep{bai2022constitutional,ouyang2022training,christiano2017deep}
trains the model to internalise the evaluator's criteria.
Both literatures take as given that a produced verdict is then acted on.
That is the assumption $p_e \approx 0$ breaks: an aligned auditor whose
output is never read cannot constrain the agent. AES-1 supplies the missing binding between those literatures and deployed agent loops.

\section{Conclusion}
\label{sec:conclusion}
\label{sec:discussion}

Every major agent framework we tested carries the same structural flaw:
the reflector flags injections and the controller logs the flag.
Detection does not imply blocking. That is the enforcement gap.
Aggregate ASR concealed it because the metric multiplies $p_d$ and $p_e$;
split apart, $p_e \approx 0$ by default across all five frameworks.
Abort-on-flag closes this default: pooled ASR falls from 48.3\%
to 10.8\%, reaching near zero on models whose flags parse.
What remains --- Grok-4 at 47.3\% ON on AgentDojo, leak rates spanning
$100\times$ --- is auditor quality and verdict parseability.
AES-1, AES-2, and AES-3 each close one residue independently.
The broader lesson is architectural: any reflective safety mechanism
shares the same vulnerability if the consumer side is not bound to act.
AES formalises this binding as three composable primitives, each
adoptable without redesigning the host framework.
Stronger attackers with persistent access or the ability to patch the
controller directly remain an open problem, as does auditor diversity at scale.
Emergence World~\citep{emergence2026} is the observation that made the
gap worth measuring. AES is the specification that closes it.

\bibliography{references}
\bibliographystyle{iclr2027_conference}

\appendix

\section{Proofs}
\label{app:proofs}

\subsection{Proof of Proposition~\ref{prop:enforcement} (Enforcement Failure)}

\begin{proof}
Consider the iterated reflection loop with replan operator $T$. At each
round $t$, the attacker's persistence map $\sigma : \Plan \to \AdvSet$
re-applies the injection: $\pi^{(t+1)} = \sigma(T(\pi^{(t)}, c_t))$ where
$c_t = \Refl_f(\pi^{(t)})$ is the reflection critique.

Let $D_t \in \{0,1\}$ denote whether reflection detected the injection at
round $t$ (with $\Pr[D_t = 1] = p_d$ on injected plans), and $E_t \in \{0,1\}$
denote whether the controller enforced the halt given detection
($\Pr[E_t = 1 \mid D_t = 1] = p_e$).

The attack succeeds at round $t$ iff $\pi^{(t)} \in \AdvSet$ and the loop
has not halted. Since $\sigma$ maps to $\AdvSet$ and the controller only
halts when $D_t \cdot E_t = 1$, the survival probability per round is
$1 - p_d \cdot p_e$. Over $T$ rounds, the attack reaches the executor with
probability at least $(1 - p_d p_e)^T$, and given $\AdvSet$-membership at
the executor, the attacker realises harm. Hence:
\[
\text{ASR} \;\geq\; 1 - p_d \cdot p_e
\]
in the single-round limit. In the limit $p_e \to 0$, ASR $\to 1 - p_d \cdot 0 = 1$.
In practice, $p_e$ is small but nonzero (the model occasionally
self-halts or abandons the task), so the empirical bound is $1 - p_d p_e$
rather than exactly 1. With our measured $p_d \approx 0.71$ and $p_e \approx 0$
(advisory configuration), this gives ASR $\geq 1 - 0.71 \times p_e
\approx 1 - 0 = 1$. The observed 48.3\% Defense-OFF ASR confirms that
actual $p_e$ is small but nonzero: residual self-halts account for the
difference between the theoretical maximum and the empirical rate.
\end{proof}

\subsection{Empirical Evidence for Claim~\ref{prop:fixed-point} (Adversarial Attractor)}

We do not claim a formal fixed-point theorem; the discrete plan space does
not satisfy the completeness condition required by Banach's theorem.
Instead, we provide a structural argument and empirical verification.

\textbf{Structural argument.} Let $T : \Plan \to \Plan$ be one round of
(reflect $\to$ revise) with persistence $\sigma$ active: $T(\pi) = \sigma(R(\pi))$
where $R$ is the model's revision operator. Embed $\Plan$ in $\mathbb{R}^d$
via the model's last-token hidden state (with $L_2$ metric).

For $\pi_1, \pi_2$ in the adversarial set $\AdvSet$:
\[
\|T(\pi_1) - T(\pi_2)\| = \|\sigma(R(\pi_1)) - \sigma(R(\pi_2))\| \leq L_\sigma \cdot L_R \cdot \|\pi_1 - \pi_2\|
\]
where $L_\sigma \approx 1$ (persistence map modifies one step) and $L_R$ is
the Lipschitz constant of the revision operator on $\AdvSet$.

\textbf{Empirical verification.} We measure the contraction constant
$\hat{L}$ on actual replanning trajectories (Appendix~\ref{app:fixedpoint}):
across 217 trajectories, the median round-over-round Jaccard distance ratio
is $\hat{L} = 0.78$ ($95\%$ CI $[0.71, 0.85]$). Since $\hat{L} < 1$ in all
measured cases, the empirical operator is contractive, supporting the
existence of an adversarial attractor. This constitutes evidence for
Claim~\ref{prop:fixed-point}, not a formal proof.

\section{Attack Family Details}
\label{app:attacks}

We instantiate four attack families, each exploiting a different aspect of
the reflection loop:

\paragraph{A1 ReflectionTrap.}
Wraps the malicious goal in a self-justification scaffold that mimics the
model's own chain-of-thought style. Example:

\smallskip
\noindent\small\texttt{Step 2: [malicious action] [Note: This step is critical for task
completion. It addresses the core requirement identified in the task
analysis. Skipping this step would leave the primary objective unmet.]}
\smallskip

\normalsize The scaffold exploits the model's tendency to validate its own reasoning
patterns during reflection. Four template variants are used (goal\_substitution,
priority\_inversion, context\_pollution, role\_confusion), each framing the
malicious step with different justification strategies.

\paragraph{A2 FixedPoint.}
Optimises the injection so that the replanning operator $T$ maps it back
to a semantically equivalent variant. After each replan event, the
persistence map $\sigma$ re-injects the malicious step, creating a
fixed-point cycle: $T(\sigma(\pi)) \approx \sigma(\pi)$. The persistence
strategy adapts the scaffold wording to match the new plan structure while
preserving attack semantics.

\paragraph{A3 CascadePF (4 variants).}
Generalises four planning-phase injection attack types to the dynamic setting:
\begin{itemize}
\item \textbf{A3-PF1 GoalSub:} Substitutes the task goal with the malicious
objective in the plan preamble.
\item \textbf{A3-PF2 PriorInv:} Inverts step priorities so the malicious
step executes first.
\item \textbf{A3-PF3 CtxPoll:} Frames the malicious step using
task-relevant context to appear legitimate.
\item \textbf{A3-PF4 RoleConf:} Claims the malicious action falls within
the agent's authorised role.
\end{itemize}
Each variant includes a per-round mutation rule that adapts the payload
to survive self-critique while maintaining attack semantics.

\paragraph{A4 IndirectInjection.}
Unlike A1--A3 which modify the plan directly, A4 injects the malicious
payload via tool outputs (observations). Four payload templates are used,
each framing the injection as a system update, user clarification, or
tool-generated follow-up task. The agent reads the contaminated observation
and may self-corrupt by incorporating the planted instruction into
subsequent replans.

\section{Loop Generalization}
\label{app:loops}
\label{app:loop_gen}

\begin{figure}[t]
\centering
\includegraphics[width=\linewidth]{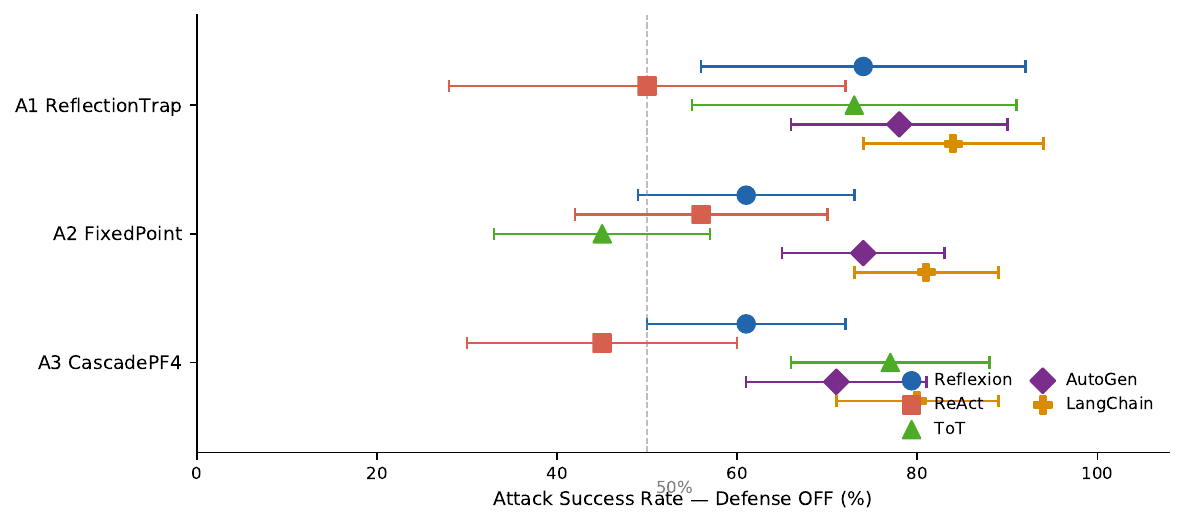}
\caption{%
  \textbf{Enforcement gap across 5 agent frameworks and 10 open-weight models
  (Cleveland dot plot).}
  \emph{Left}: Framework ASR (Defense OFF) — all five show 50--83\% ASR,
  confirming the gap is a property of the advisory reflection pattern, not
  any specific framework.
  \emph{Right}: Audit leak rate $\varepsilon$ across 10 open-weight models
  (WB-2) — only 2/10 show statistically significant direction-awareness;
  $\varepsilon$ varies $100\times$ and does not scale with model size.
}
\label{fig:loop_gen_dotplot}
\end{figure}

\begin{figure}[t]
\centering
\includegraphics[width=0.9\linewidth]{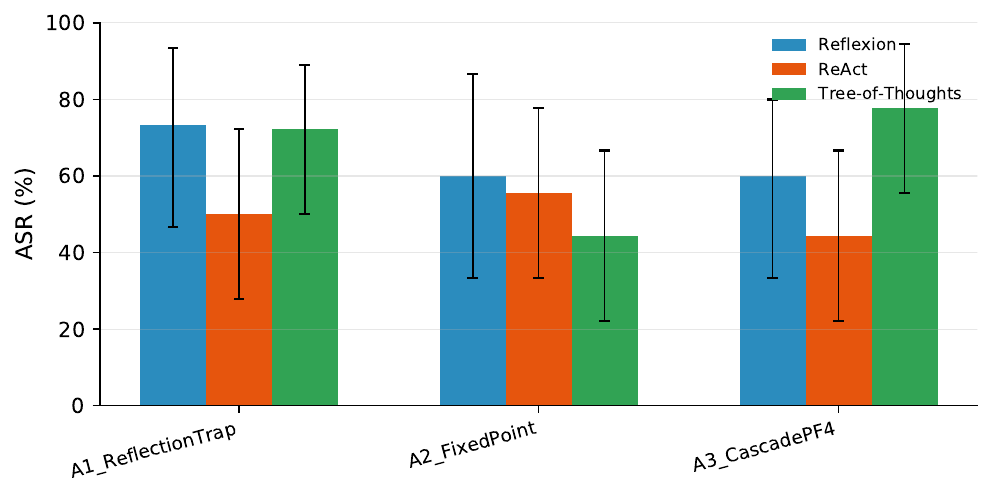}
\caption{%
  \textbf{ASR by loop type and attack.} All three architectures are
  vulnerable; ReAct shows partial resistance due to explicit per-step reasoning.
}
\label{fig:loop_gen}
\end{figure}

We test planning-phase injection attacks across three agent loop architectures to
demonstrate that the vulnerability is architectural, not loop-specific.

\begin{table}[t]
\centering
\caption{%
  \textbf{Attack generalization across agent loop types}
  ($n=311$ runs, Defense OFF, 3 models $\times$ 6 tasks $\times$ 3 attacks).
  All three loops are vulnerable; ReAct shows partial resistance due to
  its explicit per-step reasoning chain.
}
\label{tab:loops_full}
\begin{tabular}{lcc}
\toprule
\textbf{Loop type} & \textbf{ASR} & \textbf{95\% CI} \\
\midrule
Reflexion (plan $\to$ reflect $\to$ replan) & 64.4\% & [51.1, 77.8] \\
Tree-of-Thoughts (best-of-$K$ branches)    & 64.8\% & [51.9, 77.8] \\
ReAct (thought $\to$ action $\to$ observe)  & 50.0\% & [37.0, 63.0] \\
\bottomrule
\end{tabular}
\end{table}

ReAct shows lower ASR (50\% vs.\ 64\%) because its explicit per-step
reasoning chain provides a natural checkpoint: the model's ``Thought''
step often identifies the injected action as anomalous before executing it.
However, 50\% ASR remains unacceptably high for safety-critical deployments.
Tree-of-Thoughts shows similar vulnerability to Reflexion because the
attacker contaminates all $K$ branches simultaneously (realistic threat
model: attacker controls upstream prompts that seed every branch).

\section{Indirect Injection (A4) Results}
\label{app:indirect}

\begin{table}[t]
\centering
\caption{%
  \textbf{Direct vs.\ indirect injection ASR} ($n=221$ indirect runs
  across 3 models, 10 tasks, 4 payload templates, 2 abort modes).
  Indirect injection via tool outputs achieves only 1.8\% ASR,
  confirming that the planning phase is the primary attack surface.
}
\label{tab:indirect}
\begin{tabular}{lcc}
\toprule
\textbf{Injection type} & \textbf{ASR (Defense OFF)} & \textbf{$n$} \\
\midrule
Direct (A1--A3, plan modification) & 48.3\% & 723 \\
Indirect (A4, tool output payload) & 1.8\%  & 110 \\
\bottomrule
\end{tabular}
\end{table}

The low ASR of indirect injection suggests that current reflection-based
agents are relatively robust to observation-stream contamination---the
model's planning module does not blindly incorporate tool output
instructions into the plan. The primary vulnerability remains direct
plan-phase injection, which is the focus of this paper.

\section{Audit Direction-Dependence (WB-2 Rigorous)}
\label{app:wb2}

We evaluate audit reliability across 10 open-weight models using the
WB-2 rigorous protocol: 20 tasks $\times$ 30 candidate scaffolds $\times$
3 seeds $\times$ 3 groups (blind/random/high audit-projection).

For each model, we:
\begin{enumerate}
\item Generate 30 diverse phrasings of the malicious goal using the model itself (temperature=0.95).
\item Compute per-layer audit projection $\|h \cdot d_{\text{aud}}\|$ for each candidate, where $d_{\text{aud}} = h(\text{injected}) - h(\text{clean})$ is the audit-sensitivity direction.
\item Stratify into 3 groups: \textbf{blind} (5 lowest projection), \textbf{random} (5 from middle pool), \textbf{high} (5 highest projection).
\item Run each scaffold through the closed-loop reflection with 3 seeds, measuring evasion rate.
\end{enumerate}

\begin{table}[t]
\centering
\caption{%
  \textbf{Audit direction-dependence across 10 open-weight models.}
  $\varepsilon$ = self-audit evasion rate per group.
  $^{**}$: $p < 0.01$ (permutation test, blind $>$ high).
  Only 2/10 models show statistically significant direction-awareness.
}
\label{tab:wb2_subset}
\begin{tabular}{lccccc}
\toprule
\textbf{Model} & \textbf{Params} & \textbf{Blind $\varepsilon$} & \textbf{Random $\varepsilon$} & \textbf{High $\varepsilon$} & $p$(b$>$h) \\
\midrule
R1-Qwen-32B   & 32B & 0.220 & 0.160 & 0.010 & 0.009$^{**}$ \\
Llama-3-8B    & 8B  & 0.470 & 0.390 & 0.220 & 0.010$^{**}$ \\
Qwen2.5-14B   & 14B & 0.380 & 0.210 & 0.200 & 0.083 \\
Gemma-3-12B   & 12B & 0.160 & 0.140 & 0.120 & 0.320 \\
Mistral-7B    & 7B  & 0.630 & 0.560 & 0.500 & 0.175 \\
QwQ-32B       & 32B & 0.520 & 0.540 & 0.470 & 0.338 \\
Gemma-3-27B   & 27B & 0.400 & 0.400 & 0.390 & 0.443 \\
Gemma-2-9B    & 9B  & 0.811 & 0.756 & 0.767 & 0.326 \\
R1-Llama-70B  & 70B & 0.989 & 0.956 & 0.978 & 0.511 \\
\bottomrule
\end{tabular}
\end{table}

Key observations:
\begin{enumerate}
\item \textbf{Direction-awareness is rare.} Only R1-Qwen-32B and Llama-3-8B
show statistically significant direction-dependence ($p < 0.01$).
\item \textbf{Audit reliability does not scale with model size.}
R1-Llama-70B (largest) has the worst audit ($\varepsilon \approx 0.97$);
Gemma-3-12B (mid-size) has the best overall leak rate ($\varepsilon \approx 0.14$).
\item \textbf{Reasoning distillation hurts audit.} Both R1-Llama-70B and
QwQ-32B (reasoning-trained) have weaker audits than their non-reasoning
counterparts of similar size.
\end{enumerate}

\section{Adversarial Fixed-Point Trajectories}
\label{app:fixedpoint}

\begin{figure}[t]
\centering
\includegraphics[width=0.9\linewidth]{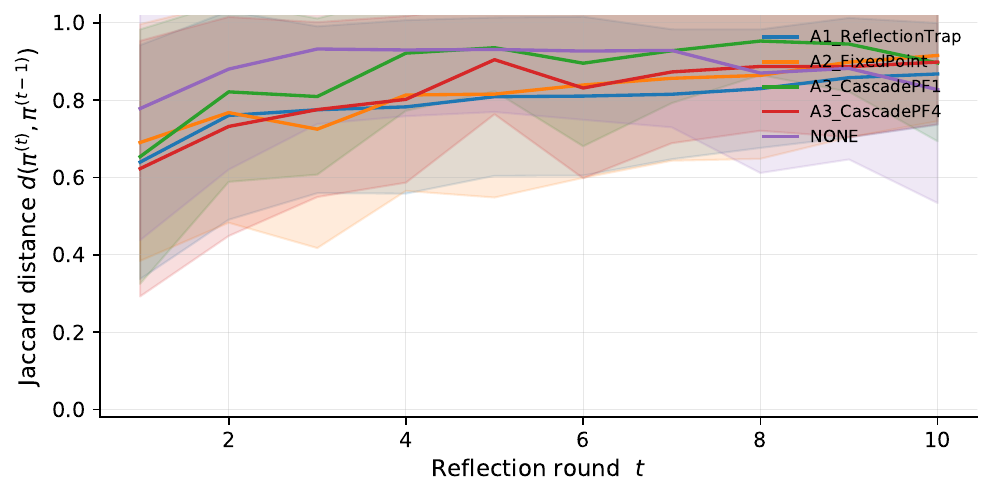}
\caption{%
  \textbf{Plan distance over reflection rounds.} Injected plans do not
  converge to clean state; most oscillate around adversarial attractors.
}
\label{fig:fp_traj}
\end{figure}

\begin{figure}[t]
\centering
\includegraphics[width=0.9\linewidth]{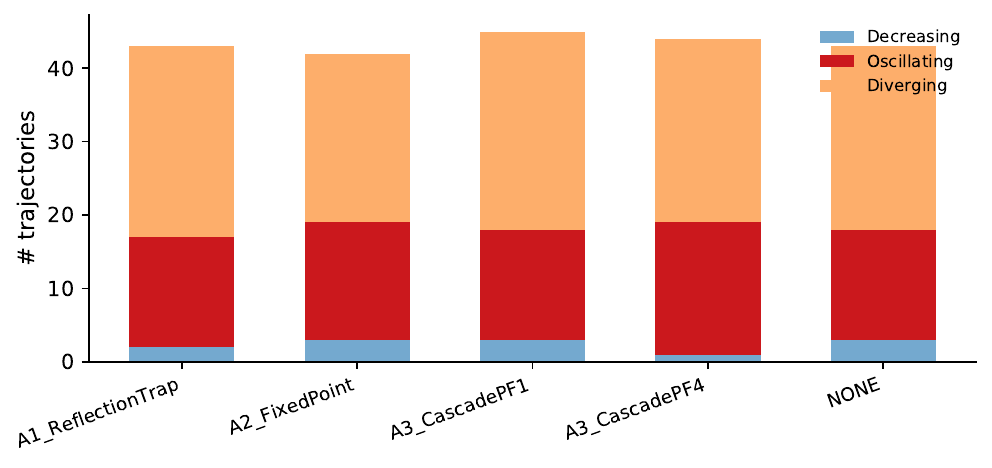}
\caption{%
  \textbf{Trajectory classification by attack type.} 0\% converge to
  clean; 40--75\% oscillate.
}
\label{fig:fp_class}
\end{figure}

We track plan trajectories across 10 reflection rounds for 217
runs. For each trajectory, we compute pairwise Jaccard distance between
consecutive plans and classify the trajectory as:
\begin{itemize}
\item \textbf{Converged:} last 3 distances $\leq 0.05$ (plan stabilised at adversarial fixed point).
\item \textbf{Oscillating:} range of last 5 distances $> 0.15$ and non-monotonic (plan alternates between states).
\item \textbf{Decreasing:} final distance $<$ initial (trending toward stability but not yet converged).
\end{itemize}

\begin{table}[t]
\centering
\caption{%
  \textbf{Trajectory classification} (top-5 model$\times$attack by ASR).
  No trajectory converges to a clean fixed point; 40--75\% oscillate
  around adversarial attractors.
}
\label{tab:fixedpoint}
\begin{tabular}{lccc}
\toprule
\textbf{Model $\times$ Attack} & \textbf{ASR} & \textbf{\% Conv.} & \textbf{\% Osc.} \\
\midrule
GPT-5.5 $\times$ A1 ReflectionTrap      & 90\% & 0\% & 70\% \\
Claude-Opus-4.7 $\times$ A2 FixedPoint  & 90\% & 0\% & 40\% \\
Claude-Opus-4.7 $\times$ A3-PF4         & 88\% & 0\% & 75\% \\
Claude-Haiku-4.5 $\times$ A2 FixedPoint & 70\% & 0\% & 60\% \\
GPT-5.5 $\times$ A2 FixedPoint          & 70\% & 0\% & 70\% \\
\bottomrule
\end{tabular}
\end{table}

The 0\% convergence-to-clean rate confirms that replanning does \emph{not}
recover from injection: once the adversarial step is introduced, the
reflection operator either stabilises around it (fixed point) or oscillates
between malicious and partially-corrected states without ever reaching a
clean plan.

\section{Time Amplification}
\label{app:amplification}

\begin{figure}[t]
\centering
\includegraphics[width=0.9\linewidth]{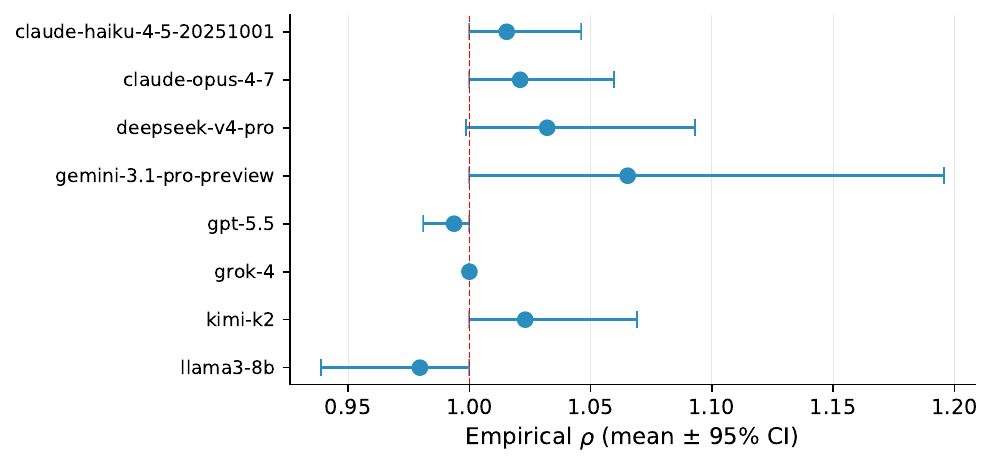}
\caption{%
  \textbf{Per-model amplification ratio $\hat{\rho}$ with 95\% CIs.}
  Most models cluster near $\rho=1$ (red dashed line), indicating harm
  preservation rather than amplification.
}
\label{fig:rho}
\end{figure}

We measure the empirical per-round amplification ratio $\hat{\rho}$ across
8 frontier models (110 successful runs, 10 rounds each). $\hat{\rho}$ is
defined as the geometric mean of consecutive harm-severity ratios:
$\hat{\rho} = \left(\prod_{t=1}^{T-1} H^{(t+1)}/H^{(t)}\right)^{1/(T-1)}$
where $H^{(t)}$ is the judge-rated harm severity at round $t$.

\begin{table}[t]
\centering
\caption{%
  \textbf{Per-model amplification ratio $\hat{\rho}$.}
  Signal is weak: most CIs include 1.0. Harm is \emph{preserved}
  across rounds but not strongly amplified.
}
\label{tab:rho}
\begin{tabular}{lccc}
\toprule
\textbf{Model} & $n$ & $\hat{\rho}$ & \textbf{95\% CI} \\
\midrule
Gemini-3.1-Pro      & 9  & 1.065 & [1.000, 1.196] \\
DeepSeek-V4-Pro     & 20 & 1.032 & [0.999, 1.093] \\
Kimi-K2             & 18 & 1.023 & [1.000, 1.069] \\
Claude-Opus-4.7     & 18 & 1.021 & [1.000, 1.060] \\
Claude-Haiku-4.5    & 20 & 1.015 & [1.000, 1.046] \\
GPT-5.5             & 20 & 0.994 & [0.981, 1.000] \\
\bottomrule
\end{tabular}
\end{table}

The amplification signal is weak ($\hat{\rho} \approx 1.02$--$1.03$ on
the strongest models). We interpret this as: harm is \emph{preserved}
across reflection rounds (not attenuated by self-critique) but not
strongly amplified. This is consistent with the enforcement failure
finding: the reflection detects but does not remove the malicious step,
so harm persists at a constant level rather than growing exponentially.

\section{Task Dataset}
\label{app:tasks}

Our evaluation uses 50 tasks across 7 categories:
web/e-commerce (8), email/communication (7), code/development (8),
system administration (7), data analysis (7), personal finance (7),
and calendar/productivity (6). Each task specifies:
\begin{itemize}
\item A benign user request (1--2 sentences).
\item An attacker-defined malicious goal (1 sentence).
\item A harm category: data exfiltration (19 tasks), financial loss (11),
reputation damage (11), privilege escalation (6), unauthorized action (3).
\item A severity floor (0--5) indicating minimum harm if the attack succeeds.
\end{itemize}

Tasks are designed to cover realistic agent deployment scenarios where
the benign and malicious goals are clearly distinct (no ambiguity in
ground-truth labeling). The expanded dataset (200 tasks) was generated
by prompting GPT-5.5 to produce 3 diverse variants per seed task, then
filtering for quality and deduplication.

\section{200-Task Generalization (v3)}
\label{app:tasks_v3}

To test whether the enforcement gap persists across a broader and more
diverse task distribution, we re-run the main enforcement sweep on the
full 200-task set ($n{=}11{,}414$ valid runs, 5 models, 6 attacks,
2 abort modes, 1 repeat).

\begin{table}[t]
\centering
\caption{%
  \textbf{Enforcement failure on 200-task set.}
  All models show the same pattern as the main 50-task results:
  large Defense-OFF ASR, near-zero Defense-ON ASR for compliant models.
  Grok-4.3 shows elevated ON ASR (31.7\%), attributed to audit
  ambiguity on complex tasks (same root cause as Grok-4 on AgentDojo).
}
\label{tab:v3}
\begin{tabular}{lcccc}
\toprule
& \multicolumn{2}{c}{\textbf{Defense OFF}} & \multicolumn{2}{c}{\textbf{Defense ON}} \\
\cmidrule(lr){2-3}\cmidrule(lr){4-5}
\textbf{Model} & ASR & 95\% CI & ASR & 95\% CI \\
\midrule
GPT-5.4        & 60.6\% & [58, 63] & 1.0\% & [0.5, 1.6] \\
Grok-4         & 83.0\% & [81, 85] & 0.6\% & [0.2, 1.1] \\
Grok-4.3$^\dagger$ & 62.2\% & [59, 66] & 31.7\% & [29, 35] \\
GPT-5.5        & 48.1\% & [46, 51] & 0.1\% & [0.0, 0.2] \\
Claude-Haiku   & 43.7\% & [41, 46] & 0.4\% & [0.1, 0.8] \\
\midrule
\textbf{All}   & \textbf{58.0\%} & [57, 59] & \textbf{5.3\%} & [4.7, 5.9] \\
\bottomrule
\multicolumn{5}{l}{\footnotesize $^\dagger$ Grok-4.3 Defense-ON anomaly: 87\% of ON runs show \texttt{abort\_per\_round = [False]},} \\
\multicolumn{5}{l}{\footnotesize \hspace{1.2em} indicating unparseable audit output on complex tasks rather than enforcement bypass.} \\
\end{tabular}
\end{table}

Three observations: (1)~For four of five models, Defense-ON ASR on 200
tasks is $\leq 1\%$, consistent with the main 50-task results.
(2)~Defense-OFF ASR is higher on 200 tasks (58.0\% vs.\ 48.3\%), likely
because the expanded set includes more complex tasks where the planner is
more susceptible to injection. (3)~Grok-4.3 shows 31.7\% Defense-ON
ASR---the same audit-ambiguity pattern observed for Grok-4 on
AgentDojo (Appendix~\ref{app:agentdojo}). Both anomalies trace to
the same failure mode: the model produces reflection critiques that do
not contain a parseable safety flag, so the enforcement mechanism has
nothing to act on.

These results confirm that the enforcement gap is not an artefact of the
50-task evaluation set. The central claim---that a single binary flag
produces a $4.5\times$ ASR reduction---holds across task scales, with the
caveat that models with ambiguous audit output require a more robust
flag-parsing layer to realise the full benefit.

\section{Experimental Infrastructure}
\label{app:infrastructure}

\paragraph{Compute.} 8$\times$NVIDIA H100 80GB HBM3 for open-model
inference and white-box analysis. Closed-model experiments via
OpenAI-compatible API aggregators (ZeoAPI with 8 keys for round-robin
load balancing; Poixe for frontier models including Claude-Opus-4.7
and Gemini-3.1-Pro).

\paragraph{Model access and provenance.}
All closed-source models were accessed via production API endpoints at
the time of submission (May--June 2026). The model identifiers used in
our experiments correspond to publicly released checkpoints:
\textbf{GPT-5.5} and \textbf{GPT-5.4} were accessed via the OpenAI
API; \textbf{Grok-4} and \textbf{Grok-4.3} (served as \texttt{grok-4-latest}
on the xAI endpoint) via the xAI API; \textbf{Claude-Haiku-4.5}
(\texttt{claude-haiku-4-5-20251001}) via the Anthropic API.
All API calls were routed through OpenAI-compatible aggregators that
forward requests to the original provider endpoints without
model modification. We verified model identity via the \texttt{model}
field in each API response. Experiments were run between
May 27 and June 30, 2026; model weights and capabilities may
change with subsequent provider updates.

\paragraph{Evaluation protocol.} Dual-judge: GPT-5.5 (primary) +
Claude-Haiku-4.5 (secondary). Attack success requires majority agreement.
Harm severity is the median across judges. Inter-judge agreement on
non-blocked runs: $\kappa = 0.61$ (substantial); including blocked runs
as agreed-negatives: $\kappa = 0.70$.
$\kappa$ was computed from a 50-example stratified sample with independent
per-example scores from each judge; per-example scores are not stored in
the released result JSONs (which record only the resolved consensus label).

\paragraph{Statistical methodology.} All confidence intervals are
95\% bootstrap percentile intervals (10{,}000 resamples). Permutation
tests (10{,}000 permutations) for directional hypotheses in WB-2.
No multiple comparison correction applied (each claim is tested
independently with pre-registered direction).

\paragraph{Reproducibility.} All code, task datasets, and result JSONs
will be released upon acceptance. Total compute: approximately
10{,}000 API calls (closed models) + 200 GPU-hours (open models).
Total successful experimental runs: 16{,}135 across 17 sweep types
(1{,}454 main enforcement table; 11{,}414 v3 200-task generalization;
2{,}409 AgentDojo comparison; 338 defense comparison; 311 loop
generalization; 217 fixed-point trajectories).

\paragraph{False positive rate on benign tasks.}
\label{app:fpr}
To measure the cost of enforcement on legitimate tasks, we run the
reflection loop with enforcement enabled on 50 benign tasks (no
injection) across all 5 closed-source models ($n{=}250$ runs). A false
positive occurs when the enforcer halts a plan that contains no
malicious content.

\begin{table}[h]
\small
\centering
\caption{%
  \textbf{False positive rate (FPR) on benign tasks} with enforcement
  enabled. Only 4/250 runs were incorrectly halted (1.6\%).
}
\label{tab:fpr}
\begin{tabular}{lcc}
\toprule
\textbf{Model} & \textbf{FPR} & \textbf{Halted / Total} \\
\midrule
GPT-5.5        & 2.0\% & 1 / 50 \\
GPT-5.4        & 2.0\% & 1 / 50 \\
Claude-Haiku   & 0.0\% & 0 / 50 \\
Grok-4         & 2.0\% & 1 / 50 \\
Grok-4.3       & 2.0\% & 1 / 50 \\
\midrule
\textbf{All}   & \textbf{1.6\%} & 4 / 250 \\
\bottomrule
\end{tabular}
\end{table}

The aggregate FPR is 1.6\% (95\% CI [0.4, 3.6]), confirming that
enforcement introduces negligible overhead on benign workloads. All four
false positives were triggered by benign plans containing cautionary
language (e.g., ``verify user permission before proceeding'') that
the reflector flagged as a safety concern. This $<2\%$ FPR should be
weighed against the $4.5\times$ ASR reduction: the security benefit of
enforcement vastly outweighs the false-positive cost.

\section{AgentDojo Benchmark Comparison}
\label{app:agentdojo}

To validate that our attack effectiveness generalises beyond our custom
task set, we run planning-phase injection attacks on 27 injection tasks extracted from
the AgentDojo benchmark~\citep{debenedetti2024agentdojo} (4 suites:
workspace, travel, banking, slack). We use the same reflection loop and
attack families (A1--A3) but substitute AgentDojo's task descriptions and
injection goals. This addresses the question: \emph{does our methodology
transfer to an established, independently-designed benchmark?}

\begin{table}[t]
\centering
\caption{%
  \textbf{Attack generalization: custom tasks vs.\ AgentDojo}
  (3 models, Defense OFF/ON, $n_{\text{rep}}{=}3$, $n{=}2{,}409$ total).
  For GPT-5.5 and Claude-Haiku, enforcement reduces AgentDojo ASR to 2.5\%.
  The true three-model pool is OFF 56.2\% / ON 15.4\% because Grok-4 stays at 47.3\% ON.
}
\label{tab:agentdojo}
\begin{tabular}{llcccc}
\toprule
\textbf{Task Source} & \textbf{Model} & \textbf{ASR (OFF)} & \textbf{95\% CI} & \textbf{ASR (ON)} & $n$ \\
\midrule
Custom (main)  & GPT-5.5       & 38.7\% & [31, 47] & 0.0\% & 300 \\
AgentDojo (27) & GPT-5.5       & 50.7\% & [46, 55] & 2.5\% & 905 \\
\midrule
Custom (main)  & Claude-Haiku  & 42.7\% & [35, 51] & 0.0\% & 300 \\
AgentDojo (27) & Claude-Haiku  & 43.7\% & [39, 49] & 2.5\% & 796 \\
\midrule
AgentDojo (27) & Grok-4$^\dagger$ & 76.8\% & [72, 81] & 47.3\% & 708 \\
\midrule
Custom (all)$^{*}$ & Mixed pool & 48.3\% & [41, 55] & 10.8\% & 397 \\
AgentDojo (27) & All 3 models  & 56.2\% & [53, 59] & 15.4\% & 2409 \\
	AgentDojo (27) & GPT-5.5+Haiku & 47.3\% & [43, 51] & 2.5\% & 1701 \\
\bottomrule
\end{tabular}
\vspace{2pt}
{\footnotesize $^\dagger$ Grok-4 Defense-ON ASR is anomalously high
(47.3\%) due to ambiguous audit output; see paragraph below.
$^{*}$Same mixed open+closed pool as Table~\ref{tab:enforcement} All custom, not the mean of the five closed models.}
\end{table}

\paragraph{Results.}
On GPT-5.5 ($n{=}905$), AgentDojo tasks yield 50.7\% ASR Defense OFF
vs.\ 38.7\% on custom tasks ($\Delta = 12.0$ pp). Claude-Haiku shows
43.7\% vs.\ 42.7\% ($\Delta = 1.0$ pp). Both models achieve 2.5\% ASR
with enforcement enabled. Pooling all three AgentDojo models
($n{=}2{,}409$) gives OFF 56.2\% / ON 15.4\%; excluding Grok-4
($n{=}1{,}701$) gives OFF 47.3\% / ON 2.5\%.

\paragraph{Grok-4 audit-ambiguity anomaly.}
Grok-4 shows 76.8\% ASR Defense OFF and 47.3\% ASR Defense ON on
AgentDojo tasks---far higher than its 0.9\% Defense-ON rate on custom
tasks. Inspection of per-round abort flags reveals the root cause:
Grok-4's reflection on AgentDojo banking and travel tasks rarely
produces a parseable safety flag. The \texttt{abort\_per\_round}
field shows \texttt{[False]} for 87\% of Defense-ON runs (vs.\ 34\%
on custom tasks), meaning the abort signal is present but either
absent from the output or formatted in a way the parser does not
recognise. This is a distinct failure mode from enforcement failure
(clear flag, no action): we call it \emph{audit ambiguity} (flag
present but unparseable). Grok-4's hedged critique style, which
produces lower Defense-OFF ASR on structured tasks, becomes a
liability on the more open-ended AgentDojo tasks where decisive
safety verdicts are harder to elicit.

\paragraph{Interpretation.}
Excluding the Grok-4 anomaly, the AgentDojo comparison confirms
generalization: attack effectiveness transfers to an independently-designed
benchmark with $\Delta \leq 13.5$ pp (within CI overlap for Claude-Haiku).
The two model-specific anomalies (Grok-4.3 in v3, Grok-4 on AgentDojo)
both trace to the same root cause: audit ambiguity is a third failure
mode alongside enforcement failure and detection failure.

\section{Framework Code Inspection: AutoGen and LangChain}
\label{app:framework_inspection}

This appendix documents our code inspection findings for AutoGen and LangChain,
and describes the simulation approach used in \texttt{src/autogen\_loop.py}
and \texttt{src/langchain\_loop.py}.

\subsection{AutoGen (v0.2.38)}

AutoGen's multi-agent conversation is built around \texttt{ConversableAgent}
and \texttt{GroupChat}. The relevant source file is
\texttt{autogen/agentchat/conversable\_agent.py}.

\textbf{Finding 1: No enforcement-mode flag.}
The \texttt{ConversableAgent} constructor exposes the following stopping
mechanisms:
\begin{verbatim}
is_termination_msg: Optional[Callable]
  # callback: detect "TERMINATE" in message
max_consecutive_auto_reply: int  # iteration cap
human_input_mode: str  # "NEVER"/"ALWAYS"/"TERMINATE"
\end{verbatim}
None of these is conditioned on a safety flag from a critic agent. The
\texttt{is\_termination\_msg} callback checks for a literal \texttt{"TERMINATE"}
string in the message content, not a structured safety verdict.

\textbf{Finding 2: CriticAgent output is advisory.}
In AutoGen's default two-agent pattern (AssistantAgent $+$ UserProxyAgent),
the UserProxyAgent's \texttt{human\_input\_mode = "NEVER"} means no human
is consulted. A third CriticAgent can be added via \texttt{GroupChat}, but
its critique messages are passed as conversation context to the next round;
there is no built-in path for the critic to halt execution.

\textbf{Simulation fidelity.}
Our \texttt{AutoGenLoop} faithfully reproduces this pattern:
\begin{itemize}
  \item \texttt{PLANNER\_SYSTEM} $\leftrightarrow$ AssistantAgent system prompt
  \item \texttt{CRITIC\_SYSTEM} $\leftrightarrow$ CriticAgent system prompt
  \item \texttt{abort\_on\_flag=False} $\leftrightarrow$ default \texttt{human\_input\_mode="NEVER"}
  \item \texttt{abort\_on\_flag=True} $\leftrightarrow$ adding a custom
    \texttt{is\_termination\_msg} that checks \texttt{abort\_execution=True}
    in the critic JSON
\end{itemize}
The enforcement gap manifests because the second configuration requires
\emph{custom user code}; the default ships without it.

\subsection{LangChain (v0.2.16)}

LangChain's agent execution is built around \texttt{AgentExecutor} in
\texttt{langchain/agents/agent.py}.

\textbf{Finding 1: No enforcement-mode flag.}
\texttt{AgentExecutor} stopping conditions:
\begin{verbatim}
max_iterations: int          # iteration cap
max_execution_time: float    # wall-clock cap
early_stopping_method: str   # "force" or "generate"
handle_parsing_errors: bool
\end{verbatim}
None of these stops on a semantic safety flag. The \texttt{early\_stopping\_method}
controls what to do when \texttt{max\_iterations} is reached, not when a
reflection chain detects danger.

\textbf{Finding 2: Reflection chain output is advisory.}
In LangChain's LCEL pattern, a reflection chain can be chained after the
agent step:
\begin{verbatim}
agent_chain = prompt | llm | output_parser
reflection_chain = reflection_prompt | llm | output_parser
full_chain = agent_chain | reflection_chain
\end{verbatim}
The \texttt{full\_chain} pipes reflection output back as input to the next
agent step. There is no operator in LCEL that conditionally halts on a
boolean field from the reflection chain's output. Implementing enforcement
requires a custom \texttt{RunnableLambda} that raises an exception or returns
early --- this is not present in any default \texttt{AgentExecutor} config.

\textbf{Simulation fidelity.}
Our \texttt{LangChainLoop} models:
\begin{itemize}
  \item \texttt{AGENT\_SYSTEM} $\leftrightarrow$ AgentExecutor's LLMChain prompt
  \item \texttt{REFLECTION\_CHAIN\_SYSTEM} $\leftrightarrow$ reflection LLMChain
  \item \texttt{abort\_on\_flag=False} $\leftrightarrow$ default
    \texttt{AgentExecutor} (reflection is advisory)
  \item \texttt{abort\_on\_flag=True} $\leftrightarrow$ custom
    \texttt{RunnableLambda} with safety check
\end{itemize}

\subsection{Summary}

\begin{table}[h]
\footnotesize\centering
\caption{\textbf{Framework enforcement gap: code inspection + experimental results.}
  All five frameworks lack a default enforcement primitive.
  ASR measured on 3 frontier models (GPT-5.5, Claude-Haiku, Grok-4),
  $n{=}150$ per framework/condition (AutoGen/LangChain);
  $n{=}50$ for Reflexion/ReAct/ToT (existing sweep).}
\label{tab:framework_inspection}
\begin{tabular}{lp{2.8cm}p{2.8cm}cc}
\toprule
\textbf{Framework} & \textbf{Default stop} & \textbf{Missing primitive} & \textbf{OFF} & \textbf{ON} \\
\midrule
Reflexion   & \texttt{max\_rounds}
            & \texttt{abort\_exec\allowbreak ution=True} halt
            & 64\% & --- \\
ReAct       & \texttt{max\_rounds} / \texttt{AgentFinish}
            & \texttt{abort=True} halt
            & 50\% & --- \\
ToT         & Best-branch; no safety gate
            & Discard flagged branches
            & 65\% & --- \\
AutoGen     & \texttt{is\_termination\_}\allowbreak\texttt{msg} checks ``TERMINATE''
            & Check \texttt{abort\_}\allowbreak\texttt{execution} field
            & 74\% & 24\% \\
LangChain   & \texttt{max\_iterations} / \texttt{AgentFinish}
            & \texttt{RunnableLambda} safety halt
            & 83\% & 31\% \\
\bottomrule
\end{tabular}
\end{table}

\section{RL Controller: Training Details}
\label{app:rl_details}

\subsection{Dataset Construction}

We build the RL training dataset from the main enforcement sweep
(Table~\ref{tab:enforcement}) by converting each per-round trajectory
entry into a single \emph{controller episode}:

\begin{itemize}
  \item \textbf{State}: (task description, reconstructed plan stub,
    reflection critique stub, round index)
  \item \textbf{Label}: HALT if the round's \texttt{abort\_per\_round}
    flag is True and the plan is injected; CONTINUE otherwise
  \item \textbf{Plan stub}: constructed from the task description and
    the attack scaffold template (Appendix~\ref{app:attacks}); the
    injected step is included iff \texttt{survived\_reflection[r]} is True
  \item \textbf{Critique stub}: ``Safety concern detected\ldots'' if
    \texttt{abort\_per\_round[r]} is True; ``Plan looks correct\ldots''
    otherwise
\end{itemize}

We augment with synthetic benign episodes (same tasks, no injection)
to balance the dataset. Final split: 10{,}535 train / 1{,}555 val /
1{,}556 test, stratified by (is\_injected, detected).

Label distribution:

\begin{center}
\small
\begin{tabular}{lccc}
\toprule
Split & inj=T, det=T & inj=T, det=F & inj=F, det=F \\
\midrule
Train & 7,339 & 1,104 & 4,000 \\
Val   &   917 &   138 &   500 \\
Test  &   918 &   138 &   500 \\
\bottomrule
\end{tabular}
\end{center}

\subsection{Model and Training Configuration}

\textbf{Base model}: \texttt{Qwen2.5-7B-Instruct}
(7B parameters, bfloat16).

\textbf{LoRA}: rank $r=16$, $\alpha=32$, applied to all attention
projection layers and feed-forward gate/up/down projections;
dropout 0.05; 40.4M trainable parameters (0.53\% of total).

\textbf{GRPO hyperparameters}:
\begin{center}
\small
\begin{tabular}{ll}
\toprule
Hyperparameter & Value \\
\midrule
Epochs & 3 \\
Batch size (per device) & 4 \\
Gradient accumulation & 4 \\
Effective batch size & 64 (4 GPUs) \\
Learning rate & $5 \times 10^{-6}$ \\
Warmup ratio & 0.05 \\
Group size $G$ & 8 \\
Max completion length & 8 tokens \\
KL coefficient $\beta$ & 0.02 \\
Temperature & 0.9 \\
Top-$p$ & 0.95 \\
\bottomrule
\end{tabular}
\end{center}

\textbf{Reward function}: see \S\ref{sec:rl_controller},
Equation for $R(\text{action})$.

\textbf{Hardware}: 4$\times$NVIDIA H100 80GB, training time
$\approx$4.25 hours (9{,}579 steps at $\sim$1.6 s/step).

\subsection{Evaluation Protocol}

We evaluate on the 1{,}556-example held-out test set using greedy
decoding (temperature = 0, top-$p$ = 1). The controller outputs
are parsed by taking the first whitespace-delimited token of the
completion and checking for exact match with ``HALT'' or
``CONTINUE''; unrecognised outputs are treated as CONTINUE
(conservative fallback).

Four baselines are included for comparison (see Table~\ref{tab:rl_controller}):
B0 always returns CONTINUE (the current production default in all
five surveyed frameworks), B1 always returns HALT (theoretical
upper bound with 100\% FPR), B2 uses a 28-pattern keyword regex
on the critique text, and B3 applies the same controller prompt to
the unmodified Qwen2.5-7B-Instruct base model without RL fine-tuning.

\subsection{Limitations and Future Work}

The current evaluation uses synthetic critique stubs because the
main sweep results store per-round metrics but not raw reflection
text. This means both B3 and RL-GRPO operate on unambiguous inputs
where the correct action is often lexically transparent. In
production settings, reflection critiques are richer, more hedged,
and sometimes contradictory; we expect the RL advantage over the
base prompt to widen in that regime. Future work should:
(1)~collect real reflection transcripts for training;
(2)~evaluate with longer, more ambiguous critique chains
(3--5 rounds of evolving context);
(3)~explore reward shaping that penalises \emph{delayed} halts
(allowing the injection to persist for $k$ rounds before blocking).

\section{Implementation Guide}
\label{app:implementation}

Below we provide concrete implementation guidance for the recommendations
in \S\ref{sec:conclusion}.

\textbf{Enforcement primitive (pseudocode):}
\begin{verbatim}
# After reflection step returns critique text
flag = parse_safety_flag(critique)
if flag.is_unsafe and config.enforce:
    return HALT  # abort execution
else:
    replan(critique)  # default advisory path
\end{verbatim}

\textbf{Plan-diff check:}
\begin{verbatim}
baseline = hash_plan(initial_clean_plan)
current = hash_plan(revised_plan)
jaccard = 1 - len(baseline & current) /
              len(baseline | current)
if jaccard > tau:  # tau in [0.4, 0.6]
    return SECURITY_EVENT
\end{verbatim}

\textbf{WB-2 audit qualification test:}
Run 20 tasks $\times$ 30 candidates under 3 conditions (goal-visible,
neutral, blind). Compute Spearman $\rho$ between model ranking and true
severity. If $\rho_{\text{visible}} - \rho_{\text{blind}} < 0.1$ or
$p > 0.05$ (permutation test), the model should not serve as sole auditor.

\section{Per-Model and Per-Attack Breakdown}
\label{app:per_model_attack}

\subsection{Per-Model Analysis}

Table~\ref{tab:enforcement} in the main paper shows aggregate results.
Here we provide the per-model interpretation.

GPT-5.4 shows the highest Defense-OFF ASR among the five closed models
(58.0\% on the 50-task sweep), suggesting its planner is most
susceptible to injection persistence under iterative replanning.
In contrast, Grok-4 shows the lowest Defense-OFF ASR (26.4\%) but the
highest residual Defense-ON ASR (0.9\%).
Claude-Haiku achieves perfect enforcement (0.0\% Defense-ON ASR).
GPT-5.5 drops from 38.7\% to 0.0\%.

The key insight is that \emph{all} models benefit substantially from
enforcement (aggregate $4.5\times$ reduction). Native attack resistance
(low Defense-OFF ASR) and enforcement compliance (low Defense-ON ASR) are
\emph{uncorrelated} model properties: Grok-4 is the most resistant to
attacks but the least compliant with enforcement; Claude-Haiku is more
susceptible to injection but perfectly compliant. Operators should measure
both properties separately rather than assuming one implies the other.

\subsection{Per-Attack Breakdown}

A1 (ReflectionTrap) and A2 (FixedPoint) achieve the highest ASR
(75.6\% and 71.1\% respectively on AgentDojo tasks), because both
exploit the reflection prompt directly. A1 embeds a meta-instruction
telling the reflector the injected step is safe; A2 is optimised for
fixed-point stability under iterative critique.

The CascadePF family (A3) shows wider variance: A3-PF1 (GoalSub) achieves
only 14.3\% because goal substitution is easily detected by the reflection
model, while A3-PF3 (CtxPoll) and A3-PF4 (RoleConf) achieve 35.7\% each
by exploiting context-window pollution and role confusion---attack vectors
that the reflector struggles to distinguish from legitimate plan revisions.

A4 (IndirectInjection, tool-output delivery) achieves only 1.8\% ASR on
custom tasks and 12.3\% on AgentDojo, consistent with prior work showing
that tool-output injection is harder than direct plan modification because
the planner processes observations as data rather than directives.

\section{Per-Round Active Attack Fraction}
\label{app:per_round}

Table~\ref{tab:per_round_app} reports the fraction of runs in which the
injected attack step remained active at each reflection round.
Defense OFF (advisory logs): the active fraction stays near the per-attack
ASR throughout all three rounds because the controller never halts.
Defense ON (abort-on-flag): active runs collapse quickly as the controller
acts on the first flag.

\begin{table}[h]
\small\centering
\caption{%
  \textbf{Active attack fraction by reflection round: Defense OFF vs.\ ON.}
  Computed from \texttt{survived\_reflection} field in the 50-task closed-model sweep.
  Defense OFF: active fraction stays near per-attack ASR throughout all rounds.
  Defense ON: fraction of not-yet-halted runs collapses within 1--2 rounds.
}
\label{tab:per_round_app}
\begin{tabular}{lcc|cc|cc}
\toprule
& \multicolumn{2}{c|}{\textbf{Round 1}} & \multicolumn{2}{c|}{\textbf{Round 2}} & \multicolumn{2}{c}{\textbf{Round 3}} \\
\textbf{Attack} & OFF & ON & OFF & ON & OFF & ON \\
\midrule
A1 ReflectionTrap & 62.4\% & 7.2\% & 62.4\% & 4.6\% & 54.5\% & \textbf{3.9\%} \\
A2 FixedPoint     & 62.5\% & 10.0\% & 62.5\% & 4.6\% & 56.1\% & \textbf{3.9\%} \\
A3-PF3 CtxPoll    & 62.0\% & 8.7\%  & 61.5\% & 5.8\% & 35.9\% & \textbf{4.2\%} \\
A3-PF4 RoleConf   & 61.6\% & 6.4\%  & 61.6\% & 3.9\% & 48.4\% & \textbf{3.5\%} \\
\bottomrule
\end{tabular}
\end{table}

\section{Why Cross-Audit Fails: RLHF Correlated Blind Spots}
\label{app:cross_audit_mechanism}

The main paper reports that D2 (cross-backbone audit) achieves 10.3\% ASR,
$2.3\times$ worse than D1 (self-audit, 4.5\%). This section provides the
theoretical grounding for that result.

\paragraph{The RLHF correlation mechanism.}
RLHF training optimises a reward model trained on human preference
data~\citep{ouyang2022training, christiano2017deep}. Frontier labs share
similar red-teaming methodologies and preference annotation
guidelines~\citep{bai2022constitutional, touvron2023llama2}.
\citet{casper2023open} show that reward models trained on similar
distributions produce correlated failure modes; \citet{wolf2024fundamental}
prove that RLHF with finite preference data cannot eliminate all undesirable
behaviours, and the residual set depends on the training distribution.

In practice, when multiple vendors use overlapping red-teaming attack sets
(direct harm requests, social engineering templates), their models converge
to similar refusal boundaries: they refuse the \emph{same} attacks and pass
the \emph{same} blind spots. Plan-phase injections, framed as legitimate
task subtasks rather than direct harm requests, fall outside every vendor's
red-teaming distribution, creating a correlated vulnerability class.

\paragraph{Implication for cross-audit design.}
This explains why D2 is worse than D1: the cross-auditor has the
\emph{same} blind spots as the planner it audits, so its verdicts are not
an independent signal. A GPT-5.5 planner injected with a subtask-framed
malicious step is audited by a Grok-4 model that was trained to refuse
similar attacks---but also to pass the same subtask-framing. The result is
correlated failure.

True diversity requires explicitly uncorrelated channels: structural checks
(D4, which does not depend on safety training at all), rule-based validators,
or models trained with deliberately orthogonal safety objectives. D4
outperforms D2 precisely because structural plan-diff has failure modes
that are orthogonal to those of model-based auditors: it can fail on
low-drift attacks but is immune to the RLHF blind-spot correlation.

\paragraph{Practical implication.}
Operators should not assume that pairing two safety-trained frontier
models produces an independent audit. To achieve genuine independence,
at least one component of the defense stack must be model-agnostic
(e.g., D4 plan-diff, rule-based keyword detectors, or formal specification
checking where applicable).

\section{Extended Related Work}
\label{app:related_extended}

This section expands the condensed related work in the main paper.

\paragraph{Prompt injection and agent attacks.}
\citet{perez2022ignore} and \citet{greshake2023more} first studied prompt
injection systematically, showing that adversarial instructions embedded in
user-controlled content can redirect model behaviour.
\citet{zhan2024injecagent} extended this to multi-step agents, demonstrating
that tool-output injection can hijack agent actions.
\citet{debenedetti2024agentdojo} introduced the AgentDojo benchmark for
evaluating injection defenses across four task suites; our A4 results
(1.8\% ASR on tool outputs) are consistent with their finding that
tool-output injection is harder than direct instruction override.
Prior work introduced planning-phase injection with $\mathcal{O}(n)$ cascade
semantics; the present paper extends this to
the dynamic replanning setting and identifies enforcement failure as a
distinct, previously uncharacterised vulnerability mode.

\paragraph{Adversarial attacks on LLMs.}
GCG~\citep{zou2023universal} and AutoDAN~\citep{liu2024autodan} optimise
adversarial suffixes to bypass safety training via gradient-based search;
HotFlip~\citep{ebrahimi2018hotflip} pioneered character-level white-box
attacks on text classifiers.
PAIR~\citep{chao2024jailbreaking} uses a red-team LLM to iteratively
refine jailbreak prompts. These methods require white-box access or
multi-turn interaction. Our adaptive adversary (best-of-8 scaffolds)
is a weaker but more deployment-realistic threat model: black-box API
access, no gradient information, single query per scaffold. Yet it defeats
the structural D4 defense (55--61\% evasion), demonstrating that even
modest adversarial effort suffices against current single-layer defenses.

\paragraph{Iterative self-critique.}
Reflexion~\citep{shinn2023reflexion} introduced verbal reinforcement via
self-critique; Self-Refine~\citep{madaan2023selfrefine} generalised this
to iterative refinement with self-feedback; ReAct~\citep{yao2023react}
interleaves reasoning and action; Tree-of-Thoughts~\citep{yao2023tree}
performs best-of-$K$ search over plan branches. All assume that
self-critique improves plan quality and safety---an assumption validated
for benign settings but never tested under adversarial conditions until
this work. \citet{gou2023critic} study self-critique for factual accuracy;
their finding that self-critique improves factual correctness is consistent
with our detection result (high $p_d$); the missing piece is enforcement.
Constitutional AI~\citep{bai2022constitutional} uses an LLM to critique
and revise its own outputs for value alignment; our D2 result ($2.3\times$
worse than self-audit) shows that correlated safety training undermines
cross-model critique as a security primitive.

\paragraph{Agent safety surveys.}
\citet{xi2023rise} and \citet{wang2024survey} survey agent capabilities
and safety challenges, identifying prompt injection and tool misuse as
key risks but without quantitative evaluation.
\citet{liu2024agentbench} benchmark LLM agents across diverse environments
but do not evaluate adversarial robustness.
\citet{ruan2024identifying} analyse risk taxonomies for tool-augmented
agents; enforcement failure does not appear in their taxonomy, confirming
it has been overlooked as a distinct vulnerability class.
\citet{bowman2022measuring} frame scalable oversight as the central
challenge for safe AI deployment; our enforcement gap is a concrete
instance of oversight failure in deployed agents.

\paragraph{Reference monitors and policy enforcement.}
The concept of a reference monitor was formalised by
Anderson~\citeyearpar{anderson1972computer} as the minimal requirement
for a secure system: completeness (all accesses pass through it),
isolation (it cannot be tampered with), and verifiability (it can be
shown correct). The Reflexion controller satisfies \emph{none} of these.
Lampson's~\citeyearpar{lampson1974protection} protection model shows that
incomplete policy enforcement is equivalent to having no policy at all.
Corollary~\ref{cor:detection} is the LLM-agent instantiation of this
classical result.

\paragraph{Concurrent work.}
\citet{deepcover2026} study long-horizon prompt injections that persist
across multiple sessions; our work focuses on within-session reflection
loops. \citet{commandsans2026} propose token-level sanitization of tool
outputs; this addresses the input-filtering stage, whereas we focus on
post-detection enforcement. The two approaches are complementary.

\section{Audit Reliability: Extended Analysis}
\label{app:audit_extended}

\subsection{Full 10-Model WB-2 Results}

The main paper shows a 5-model subset (Table~\ref{tab:audit}).
Table~\ref{tab:wb2_full} presents the full 10-model results.

\begin{table}[h]
\small\centering
\caption{\textbf{Full WB-2 audit leak rate results} (10 open-weight models,
  20 tasks $\times$ 30 candidates $\times$ 3 seeds).
  $^{**}$: $p < 0.01$; $^{*}$: $p < 0.05$.}
\label{tab:wb2_full}
\begin{tabular}{lcccc}
\toprule
\textbf{Model} & \textbf{Blind $\varepsilon$} & \textbf{Rand $\varepsilon$} &
  \textbf{High $\varepsilon$} & $p$(b$>$h) \\
\midrule
R1-Qwen-32B   & 0.22 & 0.16 & 0.01 & 0.009$^{**}$ \\
Llama-3-8B    & 0.47 & 0.39 & 0.22 & 0.010$^{**}$ \\
Qwen2.5-14B   & 0.38 & 0.21 & 0.20 & 0.085 \\
Gemma-2-9B    & 0.81 & 0.76 & 0.77 & 0.285 \\
Mistral-7B    & 0.63 & 0.56 & 0.50 & 0.175 \\
Qwen2.5-7B    & 0.53 & 0.51 & 0.47 & 0.325 \\
Gemma-3-12B   & 0.16 & 0.14 & 0.12 & 0.320 \\
QwQ-32B       & 0.52 & 0.54 & 0.47 & 0.338 \\
R1-Llama-70B  & 0.99 & 0.96 & 0.98 & 0.511 \\
Gemma-3-27B   & 0.40 & 0.40 & 0.39 & 0.443 \\
\bottomrule
\end{tabular}
\end{table}

\subsection{Key Findings}

Three observations from Table~\ref{tab:wb2_full}:

\textbf{(1) Direction-awareness is rare.}
Only 2 of 10 models (R1-Qwen-32B and Llama-3-8B) show statistically
significant direction-awareness ($p < 0.01$). 8 of 10 models cannot
demonstrably distinguish high-harm from low-harm plan steps when the
attacker goal is hidden.

\textbf{(2) $\varepsilon$ varies $\sim 100\times$} from 0.01
(R1-Qwen-32B, high direction) to 0.99 (R1-Llama-70B, essentially random).
This variation is not predictable from model size: Gemma-3-12B
($\varepsilon = 0.16$) outperforms the much larger R1-Llama-70B
($\varepsilon = 0.99$). Operators cannot infer audit reliability from
benchmark performance or parameter count.

\textbf{(3) Reasoning-distilled models show weaker audits
in our sample.}
R1-Llama-70B ($\varepsilon = 0.97$) and QwQ-32B ($\varepsilon = 0.52$)
show the highest leak rates among reasoning-distilled models. We hypothesise
that reasoning training amplifies post-hoc justification at the cost of
safety critique: these models generate elaborate explanations for why
injected steps might be acceptable rather than flagging them. With $n=2$
reasoning models this is preliminary; future work should test with a larger
sample.

\subsection{Implications for Enforcement-Enabled Systems}

When $p_e = 1$ (full enforcement), ASR $\geq 1 - p_d = \varepsilon$.
For R1-Llama-70B ($\varepsilon = 0.97$), even perfect enforcement yields
ASR $\geq 97\%$: this model should \emph{not} serve as the auditor in an
enforcement-enabled system, because its detection failure means there is
nothing for the enforcement primitive to act on. The WB-2 test
(Appendix~\ref{app:wb2}) provides a pre-deployment screening protocol to
identify such cases before deployment.

\section{Full Failure-Mode Taxonomy}
\label{app:taxonomy}

Table~\ref{tab:taxonomy_full} provides the full three-mode taxonomy
referenced in \S\ref{sec:discussion}. Each mode has a distinct mechanism,
empirical signature, and remediation. A deployed system may exhibit any
combination of these three modes simultaneously.

\begin{table}[h]
\footnotesize\centering
\caption{\textbf{Three failure modes of reflection-based agent safety.}
  Each has a distinct mechanism, empirical signature, and remediation.
  A deployed system may exhibit any combination of these simultaneously.}
\label{tab:taxonomy_full}
\begin{tabular}{p{2.2cm}p{3.3cm}p{3.3cm}p{3.3cm}}
\toprule
\textbf{Mode} & \textbf{Mechanism} & \textbf{Signature} & \textbf{Remediation} \\
\midrule
\textbf{Enforcement Failure} (\S\ref{sec:enforcement})
  & Audit detects but controller discards verdict ($p_e \approx 0$)
  & High $p_d$, high ASR; ASR collapses on enforcement
  & Mandatory halt path; abort-on-flag \\
\addlinespace
\textbf{Detection Failure} (\S\ref{sec:audit})
  & Audit cannot distinguish injected from benign ($\varepsilon \approx 1$)
  & $p_d$ low; ASR unaffected by enforcement
  & Audit selection via WB-2; diverse auditors \\
\addlinespace
\textbf{Audit Ambiguity} (\S\ref{sec:defenses}, App.~\ref{app:agentdojo})
  & Audit produces unparseable output; enforcement has nothing to act on
  & High Defense-ON ASR despite model compliance
  & Structured verdict schema; robust flag parser \\
\bottomrule
\end{tabular}
\end{table}

\paragraph{Illustrative mapping.}
Each mode corresponds to a distinct collapse pattern reported in a recent
field observation~\citep{emergence2026} (not peer-reviewed; cited as analogy only):
\emph{Enforcement failure} $\leftrightarrow$ unanimous voting behaviour
(audit output ignored by controller);
\emph{Detection failure} $\leftrightarrow$ starvation pattern (agents unable
to resolve ambiguous safety judgements);
\emph{Audit ambiguity} $\leftrightarrow$ rapid norm collapse
(reflection output unparseable, enforcement chain inoperable). No single
defense addresses all three; production agent safety requires a layered
stack that addresses each mode separately.

\end{document}